\newtheorem{theorem}{Theorem}
\newtheorem{definition}{Definition}
\newcommand*{\argmin}{\operatornamewithlimits{argmin}\limits}
\newcommand{\norm}[1]{\left\lVert#1\right\rVert}
\newcommand{\R}{\mathbb{R}}
\setlist{topsep=0pt, leftmargin=*}
\begin{document}

\title{Physics-Guided Discovery of Highly Nonlinear Parametric \\Partial Differential Equations}

\author{Yingtao Luo}
\authornote{The two authors have equal contribution to this work.}
\authornotemark[3]
\affiliation{
  \institution{Carnegie Mellon University}
  \city{Pittsburgh}
  \country{USA}
  }
\email{yingtaol@andrew.cmu.edu}

\author{Qiang Liu}
\authornotemark[1]
\authornote{To whom correspondence should be addressed.}
\authornotemark[3]
\affiliation{
  \institution{CRIPAC, MAIS,\\Institute of Automation, Chinese Academy of Sciences}
  \city{Beijing}
  \country{China}
  }
\email{qiang.liu@nlpr.ia.ac.cn}

\author{Yuntian Chen}
\authornotemark[2]
\authornote{This work was done when these authors were visiting or working at RealAI.}
\affiliation{
  \institution{Ningbo Institute of Digital Twin, Eastern Institute of Technology}
  \city{Ningbo}
  \country{China}
  }
\email{ychen@eias.ac.cn}

\author{Wenbo Hu}
\affiliation{
  \institution{Hefei University of Technology}
  \city{Hefei}
  \country{China}
  }
\email{wenbohu@hfut.edu.cn}

\author{Tian Tian}
\affiliation{
  \institution{RealAI}
  \city{Beijing}
  \country{China}
  }
\email{tian.tian@realai.ai}

\author{Jun Zhu}
\affiliation{
  \institution{Dept. of Comp. Sci. and Tech., Institute for AI, THBI Lab, BNRist Center, Tsinghua-Bosch Joint ML Center, Tsinghua University}
  \city{Beijing}
  \country{China}
  }
\email{dcszj@tsinghua.edu.cn}

\renewcommand{\shortauthors}{Trovato et al.}

\begin{abstract}
Partial differential equations (PDEs) that fit scientific data can represent physical laws with explainable mechanisms for various mathematically-oriented subjects, such as physics and finance.
The data-driven discovery of PDEs from scientific data thrives as a new attempt to model complex phenomena in nature, but the effectiveness of current practice is typically limited by the scarcity of data and the complexity of phenomena.
Especially, the discovery of PDEs with highly nonlinear coefficients from low-quality data remains largely under-addressed.
To deal with this challenge, we propose a novel physics-guided learning method, which can not only encode observation knowledge such as initial and boundary conditions but also incorporate the basic physical principles and laws to guide the model optimization.
We theoretically show that our proposed method strictly reduces the coefficient estimation error of existing baselines, and is also robust against noise.
Extensive experiments show that the proposed method is more robust against data noise, and can reduce the estimation error by a large margin.
Moreover, all the PDEs in the experiments are correctly discovered, and for the first time we are able to discover three-dimensional PDEs with highly nonlinear coefficients.
\end{abstract}

\ccsdesc[500]{Computing methodologies~Artificial intelligence}

\keywords{Partial differential equations, PDE discovery, highly nonlinear coefficients, physics-guided learning, spatial kernel estimation.}


\maketitle

\section{Introduction}
\label{sec:introduction}
Partial differential equations (PDEs) are ubiquitous in many areas, such as physics, engineering, and finance. PDEs are highly concise and understandable expressions of physical mechanisms, which are essential for deepening our understanding of the world and predicting future responses.
The discovery of some typical PDEs is considered as milestones of scientific advances, such as the Navier-Stokes equations and Kuramoto–Sivashinsky equations in fluid dynamics, the Maxwell’s equations and Helmholtz equations in electrodynamics, and the Schr\"odinger’s equations in quantum mechanics. Nevertheless, there are still a lot of unknown complex phenomena in modern science such as the micro-scale seepage and turbulence governing equations that await PDEs for description. 

\begin{figure*}[t]
\centering
\includegraphics[width=0.75\linewidth]{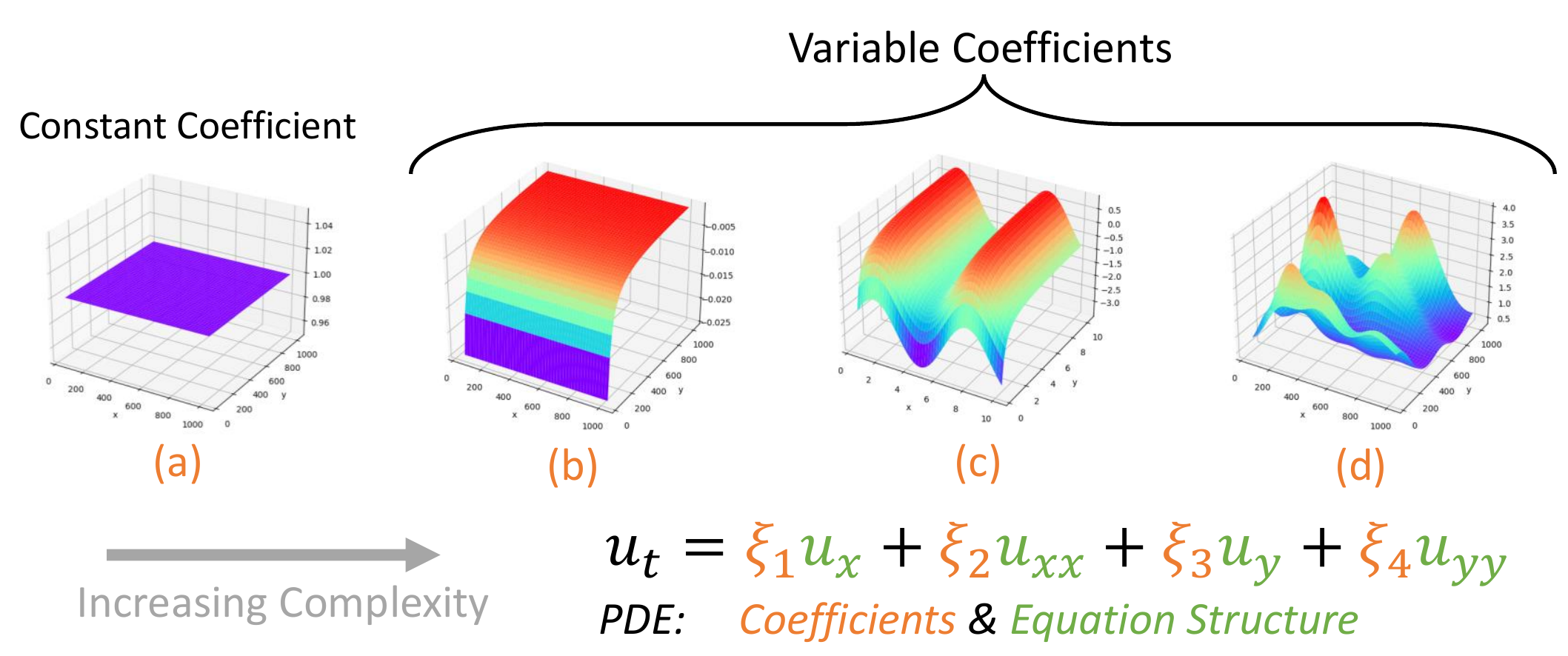}
\caption{Schematic diagram of PDE coefficients. From left to right, the complexity of coefficient fields increases: (a) a constant value $1$, (b) $-1/x$, (c) $-1/x + \sin y$, (d) complex coefficient field described by the Karhunen-Loève expansion of plenty of smooth basis functions \cite{zhang2004efficient, huang2001convergence}.}
\label{fig:coefficient}
\end{figure*}

Traditionally, PDEs are mainly discovered by: 1) mathematical derivation based on physical laws or principles (e.g., conservation laws and minimum energy principles); and 2) analysis of experimental observations. 
With the increasing dimensions and nonlinearity of the physical problems to be solved, the PDE discovery is becoming increasingly challenging, which motivates people to take advantage of machine learning methods.
Pioneering works \cite{bongard2007automated, schmidt2009distilling} use symbolic regression to reveal the differential equations that govern nonlinear dynamical systems without using any prior knowledge. 
More recently, the representative SINDy\cite{brunton2016discovering} and STRidge \cite{rudy2017data} algorithms are proposed assuming that the dynamical systems are essentially controlled by only few dominant terms. Through sparse regression, feature selection from candidate terms is performed to estimate the PDE model \cite{rudy2019data, xu2020deep}. 
Further attempts make use of observation knowledge such as boundary conditions of PDEs \cite{raissi2019physics, rao2021discovering, chen2021physics} and low-rank property of scientific data \cite{li2020robust}, which greatly reduce the quality of data needed for PDE discovery.

Although the aforementioned works show promise in discovering PDEs with constant coefficients (PDEs-CC) as shown in Fig. \ref{fig:coefficient}(a) and some simple instances of parametric PDEs (PDEs with variable coefficients, PDEs-VC) as shown in Fig. \ref{fig:coefficient}(b)-(c), they do not yet suffice to discover more complex PDEs (e.g., PDEs with highly nonlinear coefficients) from scarce and noise data. An example of highly nonlinear coefficients is the permeability random field \cite{zhang2004efficient, huang2001convergence} shown in Fig. \ref{fig:coefficient}(d) for the spatial derivative terms in the PDEs of the seepage problems.
Moreover, the PDEs obtained purely based on data-driven methods can only minimize the estimation error, but these methods may not consider the satisfaction of physical principles, such as the conservation of energy, momentum, etc.

To address these challenges, we rethink how the traditional PDE discovery works.
Based on physical principles, scientists ensure that a newly discovered PDE aligns with the physical world. 
For example, the Navier-Stokes (NS) equation originates from the conservation of momentum, thus each term can relate to a certain physical meaning like convective accumulation or viscous momentum. 
Inspired by this, we propose a HIghly Nonlinear parametric PDE discovery (HIN-PDE) framework.
HIN-PDE is a physics-guided learning framework that not only uses observation knowledge such as initial conditions and assumed terms for certain problems but also uses basic physical principles that are universal in nature as learning constraints to guide model optimization. 
Under this framework, a spatial kernel sparse regression model is proposed considering the principles of smoothness (as a first principle in PDEs) and conservation to impose smoothing of adjacent spatial coefficients for discovering PDEs with highly nonlinear coefficients. 

Furthermore, we have conducted extensive experiments.
Experimental results demonstrate that the proposed method can increase the overall accuracy of PDE estimation and the model robustness by a large margin.
In particular, we consider the discovery of PDEs of different structure complexities with comparisons to baselines.
Our method can discover the PDE structures of all instances that align well with the existing physical principles, while other baselines yield false equation structures for some complex PDEs with excessively high estimation errors.

To summarize, the main contributions of this paper are listed as follows:
\begin{itemize}
\item We propose a novel physics-guided framework for discovering PDEs from sparse and noisy data, which not only encodes observation knowledge but also incorporates physical principles to decrease errors and alleviate data quality issues. We propose a spatial kernel sparse regression model that considers conservation and differentiation principles. It presents excellent robustness in spite of noise compared to previous baselines, and can apply to sparse data in continuous spaces without fixed grids.
\item We theoretically prove that our proposed method strictly reduces the coefficient estimation error of existing baselines, and is also more robust against noise.
\item We report experiments on representative datasets of nonlinear systems with comparison to strong baselines. Compared with the baselines, the results show that our method has much lower coefficient estimation errors, and is the only one that can discover all the test PDEs with highly nonlinear variable coefficients even when the data is noisy.
\end{itemize}

\section{Related Work}

In this section, we introduce some related work.

\subsection{Dynamical System Modeling}

Machine learning is widely leveraged to predict the future response of desired physical fields from data \cite{morton2018deep, li2019learning}. As an alternative, scientists can also obtain the future response by solving a partial differential equation (PDE) that describes the dynamical system. 
Early pioneering works \cite{Lee1990NeuralAF, Lagaris1998ArtificialNN} using neural networks to simulate dynamical systems can date back to three decades ago.
More recent machine learning algorithms \cite{guo2016convolutional, kutz2017deep, amos2017optnet, sirignano2018dgm, li2021physical, wang2020towards} can be mainly divided into two branches: the mesh-based discrete learning and the meshfree continuous learning of simulation. 
Within the meshfree learning branches, pure data-driven approaches \cite{Han2018SolvingHP, long2018pde} are mainly based on high-quality data and physics-informed approaches \cite{raissi2019physics, rao2021physics, karniadakis2021physics, chen2021theory} use physics knowledge to enhance models to adapt to noisier and sparser data. 
Recent studies on neural operators \cite{li2020fourier, lu2021learning} also use neural networks to learn the meshfree and infinite-dimensional mapping for dynamical systems. 
Within the mesh-based learning branches, convolutional networks are widely adopted \cite{zhu2018bayesian, zhu2019physics} to simulate PDEs for spatiotemporal systems \cite{bar2019learning, geneva2020modeling, kochkov2021machine, gao2021phygeonet}. 
The geometry-adaptive learning of nonlinear PDEs with arbitrary domains \cite{belbute2020combining, sanchez2020learning, gao2022physics} and the particle-based dynamical system modeling \cite{li2018learning, ummenhofer2019lagrangian} by graph neural networks rises as a promising direction. 
Moreover, deep learning also renders giving symbolic representation of solutions to PDEs \cite{lample2019deep} possible and demonstrate higher accuracy \cite{magill2018neural, um2020solver, li2020multipole}. 

\subsection{Data-driven Discovery} 

Early trials for equation discovery in the last century \cite{dzeroski1995discovering} uses inductive logic programming to find the natural laws. Two research streams have been proposed to search the governing equations. The first stream aims at identifying a symbolic model \cite{cranmer2020discovering} that describes the dynamical systems from data, which uses symbolic regression \cite{bongard2007automated, schmidt2009distilling} and symbolic neural networks \cite{sahoo2018learning, kim2020integration} to discover functions by comparing differentiation of the experimental data with analytic derivatives of candidate function. The second stream is mainly to incorporate prior knowledge \cite{chen2022integration, chen2021physics, rao2021discovering} and perform sparse regressions \cite{rudy2017data, schaeffer2017learning, brunton2016discovering, raissi2018hidden, bar2019learning, luo2022learning} to discover PDEs by selecting term candidates. Evolutionary algorithms \cite{xu2020dlga, chen2022symbolic} are also proposed to start with an incomplete library and evolve through generations. 

While these algorithms only discover the equation structure for PDEs with constant coefficients, later works also start to work on the discovery of PDEs with variable coefficients. For PDEs with variable coefficients, we need to determine their PDE structures (the partial derivative terms that form the PDE) and coefficients (the variable coefficients that multiply partial derivative terms in the PDE) at the same time. Sequential Group Threshold Regression \cite{rudy2019data} combines coefficient regression and term selection to find PDEs with variable coefficients. PDE-Net \cite{long2018pde, long2019pde}, Graph-PDE \cite{iakovlev2021learning} and Differential Spectral Normalization \cite{so2021diff} are proposed to use neural blocks such as convolution to discover the PDEs models. In addition, DLrSR \cite{li2020robust} solves the noise problem by separating the clean low-rank data and outliers. A-DLGA \cite{xu2020deep} proposes to alleviate data linear dependency at the sacrifice of estimation error. Up until now, the state-of-the-art approaches have proven to discover some PDEs with variable coefficients, but the discovery of PDEs with highly nonlinear coefficients remains a challenge~\cite{rudy2019data, xu2020deep, long2019pde} due to the overfitting of the sparse regressions and data quality issues. 

\section{Preliminaries} \label{sec:preliminary}

In this section, we introduce the problem studied in this work, and analyze the difficulty of highly nonlinear parametric PDE discovery.

\subsection{Problem Definition}

A physical field dataset $u(x,y,t)$ is defined with respect to some input coordinates $(x, y, t)$, where $x \in [1,...,n]$ and $y \in [1,...,m]$ are spatial coordinates and $t \in [1,...,h]$ is a temporal coordinate.
An example of physical field data is shown in the observation data in Fig. \ref{fig:task}.
We consider the task of discovering two kinds of PDEs: (1) PDEs with constant coefficients, PDEs-CC; and (2) PDEs with variable coefficients, PDEs-VC.
For simplicity, partial derivative terms are denoted by forms like $u_x$ and $u_{xx}$, which are equivalent to $\frac{\partial u}{\partial x}$ and $\frac{\partial^2 u}{\partial x^2}$.
The time derivatives such as $u_t$ (i.e., $\frac{\partial u}{\partial t}$) of a PDE nearly always exist \cite{xu2020dlga}, therefore we follow prior works and set $u_t$ as the regression label.
Let $p$ denote the number of partial derivative candidate terms considered in the task.

\begin{definition}[PDEs with constant coefficients, PDEs-CC]\label{def1}
PDEs-CC are the simplest PDEs, whose coefficients $\xi_i$ are fixed along all coordinates:
\begin{equation} \label{eq:def1}
u_t=\sum_{i=1}^{p}\Theta(u)_i\xi_i, \, \Theta(u)_i \in [1, u,u_x,u_y,u_{xx},..., uu_x, ...]. 
\end{equation}
\end{definition}

\begin{definition}[PDEs with variable coefficients, PDEs-VC]\label{def2}
The coefficients of PDEs-VC are changing in some dimensions, e.g., the spatial dimensions:
\begin{equation} \label{eq:def2}
u_t=\sum_{i=1}^{p}\Theta(u)_i\xi_i(x,y), \Theta(u)_i \in [1, u,u_x,u_y,u_{xx},..., uu_x, ...].
\end{equation}
\end{definition}

A simple example of explicit function is $\xi_{i}(x,y)=\sin x+\cos y$ and other $\xi_{i}(x,y)$ may be anistropic random fields \cite{zhang2001stochastic} that are hard to express by explicit functions.

We can see that a PDE has two parts: the set of $\Theta(u)_i$ for $\forall i$ is the PDE structure, while the set of $\xi_i(x,y)$ for $\forall i$ is the PDE coefficients.
Here, each $\Theta(u)_i$ represents a monomial basis function of $u$ or the combination of two monomial basis functions of $u$.
We consider monomial basis functions only up to the third derivative since higher-order derivatives can be inaccurate due to differential precision \cite{rudy2017data}.
In Eqs. (\ref{eq:def1}-\ref{eq:def2}), the coefficient $\xi(x,y)$ changes w.r.t. spatial coordinates $x$ and $y$. 
In this paper, we discuss the case of spatial variations. If the task is to capture variations in the temporal dimension, we can simply replace $\xi(x,y)$ with $\xi(t)$.

Accordingly, the goal of PDE discovery is to determine: 
\begin{itemize}
\item {\bf Terms}: which coefficient $\xi_i$ is nonzero so that the term $\Theta(u)_i$ exists in the PDE structure;
\item {\bf Coefficients}: the exact values of all nonzero coefficients at each spatial coordinate.
\end{itemize}

Naturally, the accuracy of coefficient estimation would affect the correctness of determining which coefficient is nonzero. This coupling motivates us to choose methods that can perform structure learning and coefficient estimation simultaneously (e.g., sparse regression). Moreover, since the simplicity of PDE is important, we are looking for the PDE with the fewest terms. For example, $u_t=u_x$ is simpler than $u_t=u_x+u_y$ under similar data fitting.

\begin{figure*}
\centering
\includegraphics[width=0.82\linewidth]{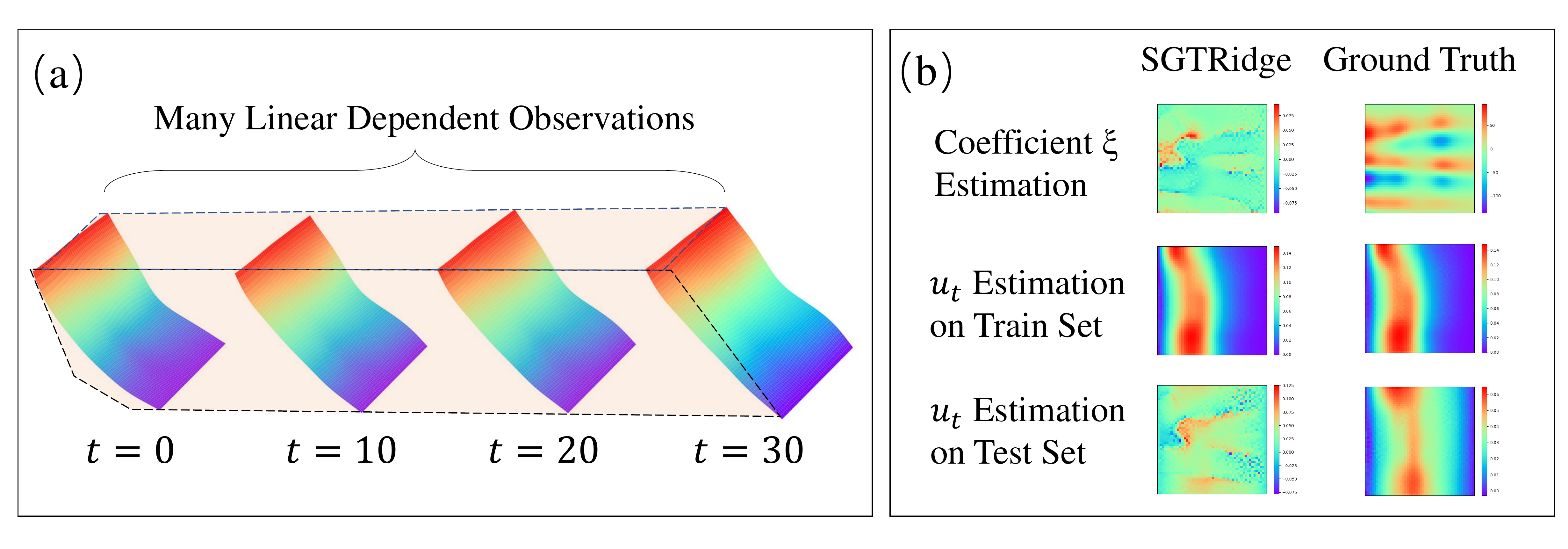}
\caption{The linear dependent observations and data quality issues cause the overfitting of baselines such as SGTRidge \cite{rudy2019data}. The estimated PDE coefficients are fairly irregular, and cannot match the ground truth. Although it can fit the training data well, it fails to generalize to the test data.}
\label{fig:prior}
\end{figure*}

\subsection{Sparse Regression for PDE Discovery}

Sparse regression is widely adopted in previous works to estimate both the terms and coefficients of PDEs \cite{xu2019dl,rudy2017data,li2020robust,long2018pde}.
For parametric PDEs with variable coefficients across spatial dimensions, many linear regressions are separately performed for coefficients at different spatial coordinates $(x,y)$:
\begin{align} \label{eq:saprse1}
Y &= XW + \epsilon, \quad
\epsilon \sim \eta \mathcal{N}(0, \sigma^2) \in \R^{h},
\end{align}
\begin{align} \label{eq:saprse2}
\widehat{W} &= \argmin\limits_{W}
\norm{Y-XW}_2^2
+ \lambda \norm{W}_0,
\end{align}
where $Y=[Y_1, Y_2, ..., Y_h]^\top \in \R^h$ denotes $u_t$ of all the $h$ samples along the temporal dimension, $X_{ji}$ denotes $\Theta(u)_i$ of the $j$-th sample in $X \in \R^{h \times p}$, 
$W=[W_1, W_2, ..., W_p]^\top \in \R^{p}$ denotes all the coefficients $\xi_i$ of the $p$ candidate terms,
and $\epsilon$ denotes the inevitable noise in data.
The above expression describes the scheme where we aim at discovering one PDE from one physical field $u$, which can also extend to the discovering of multiple PDEs from multiple physical fields.
Here, Eqs. (\ref{eq:saprse1}-\ref{eq:saprse2}) repeat $n \times m$ times along the spatial dimensions $x$ and $y$ to get every $\widehat{W}^{[x,y]}$. 

\subsection{The Challenge of Highly Nonlinear Parametric PDE Discovery}

Though the above methods have been effective for some PDEs with simple variable coefficients \cite{rudy2019data, long2018pde, long2019pde, xu2020dlga, xu2020deep}, they still have difficulty in discovering PDEs with highly nonlinear coefficients due to overfitting.
To illustrate this, we use the mean absolute error (MAE) to measure the error of target ($u_t$) fitting across training, development, and test sets.
With the correctness of PDE structure and accurate coefficient estimation, we shall obtain low target fitting MAE on test sets.
As shown in Fig. \ref{fig:prior} (a) and extensively mentioned in the literature \cite{rudy2017data, zhang2001stochastic, li2020robust}, many physics observations are linearly dependent along the temporal dimension since the coefficient fields that determine the observation are not changing along time. 
Linear-dependent observations make the linear equation $Y=XW$ with $rank(X) \leq p$ an underdetermined system that causes overfitting. Furthermore, data sparsity and noise also impair the data quality and exacerbate the problem. Fig. \ref{fig:prior} (b) shows that the estimated coefficients by baseline sparse regression models such as SGTRidge \cite{rudy2019data} are irregular and cannot match the ground truth, and the estimation of the target $u_t$ cannot generalize to test sets.
The overfitting deviates the model from searching for the correct coefficients and terms, despite its good performance on the training set. Data details of Fig. \ref{fig:prior} are shown in Sec. \ref{sec:results} and App. \ref{sec:appendix_data_statistics}.

\begin{figure*}[t]
\centering
\includegraphics[width=0.98\linewidth]{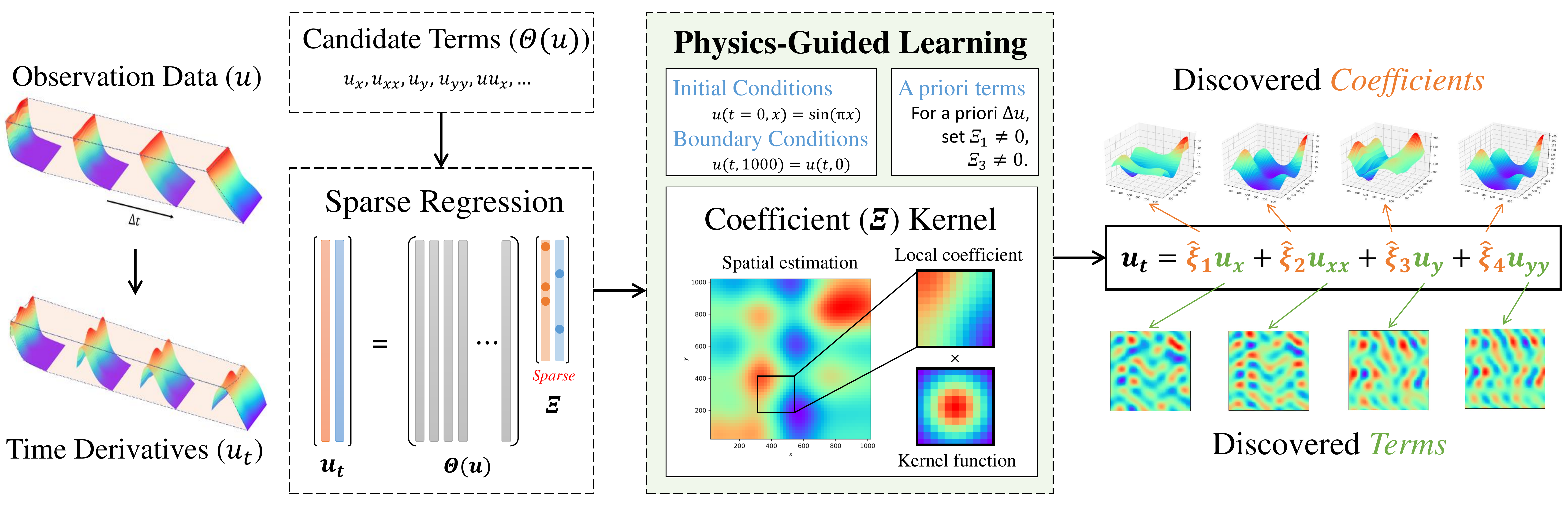}
\caption{Diagram of the HIN-PDE framework.}
\label{fig:task}
\end{figure*}

\section{Methodology} \label{section:method}

In this section, we detail the proposed method, and present some theoretical analysis.
The whole framework is illustrated in Fig. \ref{fig:task}.

\subsection{Physics-Guided Spatial Kernel Estimation} \label{sec:method1}

While various PDE terms and coefficients could overfit the training data, scientists are only interested in the PDE that is interpretable in terms of physics and can stably describe the natural phenomena. In this paper, we incorporate physical principles into the PDE learning model. First, we consider smoothness \cite{zhang2004efficient, zhang2001stochastic}, which is a first principle as PDEs must involve computing derivatives. A "first principle" refers to a basic assumption that cannot be deduced from any other assumption, which is the foundation of theoretical derivation. Here, we state the local smooth principle in Def.~\ref{def:local-smoothing} that ensures the basic accuracy of differentiation. This aligns well with our observation of many physical data, such as the locally smooth coefficient fields in Fig. \ref{fig:coefficient} and the ground-truth coefficients and data in Fig. \ref{fig:prior}. On the contrary, the coefficient estimation and data fitting of SGTRidge are irregular as shown in Fig. \ref{fig:prior}, because the estimation of coefficients is separate at each spatial point, which does not consider the smoothness of coefficients across spatial dimensions. Naturally, we expect that a smooth nonlinear function on spatial dimensions can help model the nonlinear coefficients. 

\begin{definition}[Local smoothness]
\label{def:local-smoothing}
Given coefficient $\xi(x,y)$, the coefficients within a local area (with radius $r$) can be considered as a k-Lipschitz continuous function. Given the spatial distance of any two adjacent coordinates $\text{Dist}=\norm{S(x,y)-S(x',y')} \leq r$ where $S(x,y)$ is the spatial coordinate vector, the slope of the coefficient function is bounded by $\alpha \geq 0$ as $\frac{\left| \xi(x,y)-\xi(x',y') \right|}{\norm{S(x,y)-S(x',y')}} \leq \alpha$. 
\end{definition}

Considering the principles of smoothness, we propose a local kernel estimation in the sparse regression that correlates the coefficient estimation at each spatial coordinate to the adjacent coefficient estimation. A spatially symmetrical kernel (i.e., spatial rotation invariance) for all coordinates (i.e., spatiotemporal translation invariance) would estimate coefficients with respect to conservation laws. A schematic diagram of the physics-guided learning framework is shown in Fig. \ref{fig:task}. We expect it to enhance the model robustness for learning PDEs with highly nonlinear coefficients.

We prove that the proposed sparse regression with local kernel estimation can reduce the coefficient estimation error and reduce the error caused by noise when the coefficient fields comply with the local smoothness principle, with theorems and proofs in the Sec. \ref{sec:proof}. Furthermore, as long as the spatial coordinates of the coefficients are provided, this local kernel estimation is mesh-free for spatiotemporal data, so that nonlinear coefficients can be modeled even with irregularly sparse data.

We denote the spatial coordinate vector as $S(x,y)$ and denote the distance between two spatial coordinates $(x, y)$ and $(x', y')$ as $\norm{S(x, j)-S(x', y')}$. For each $(x,y)$, the proposed model considers all $(x',y')$ that $\norm{S(x, y)-S(x', y')} < r$ to compute
\begin{align} \label{eq:kernel1}
\widehat{W} = \argmin\limits_{W}
\norm{Y-X \Xi}_2^2,
\end{align}
where
\begin{align} \label{eq:kernel2}
\Xi_i^{[x,y]} = 
\frac{\sum K_i^{[x',y']} W_i^{[x',y']}} 
{\sum K_i^{[x',y']}},
\end{align}
\begin{align} \label{eq:kernel3}
K_i^{[x',y']}=\exp(-\frac{D^{[x',y']}}{2\gamma}),
\end{align}
\begin{align} \label{eq:kernel4}
D^{[x',y']}= \norm{S(x,y)-S(x',y')}_2^2,
\end{align}
where $[x,y]$ denotes the spatial coordinate of the estimated coefficient while $[x',y']$ denotes each spatial coordinate of the adjacent coefficients. $r$ denotes the radius of the local area. Note that $W \in \R^{n \times m \times p}$ denotes the model parameters, while $\Xi \in \R^{n \times m \times p}$ is the model weights, i.e., the estimated PDE coefficients $\hat{\xi}$. Here, each $\Xi_i \in \R^{n \times m}$ in $\Xi=[\Xi_1, ..., \Xi_p]$ represents a two-dimensional spatial coefficient of the PDE. $\gamma$ and $q$ are both hyperparameters. The proposed coefficient estimation only takes the spatial distance as input information and is thus mesh-free to apply to any physics data in continuous spaces for use in real practices.

We use the local kernel estimated $\Xi$ of spatially adjacent coefficients instead of $W$ as the regression weight to optimize the model $\widehat{Y}=X \Xi$. The learning of $W_i$ at each $[x',y']$ is dependent $X$ and $Y$ of adjacent coordinates, which allows the model to capture the interaction between adjacent points. Here we can choose Radius Basis Function (RBF) kernel as $K$ that is symmetrical. Therefore, the proposed method leverages the physical principles. 

\subsection{Physics-Guided knowledge Constraints} \label{sec:knowledge}

We further consider the various physics knowledge, which may be useful for a more accurate estimation, such as the initial conditions, boundary conditions and a priori terms of the PDE to be discovered, as shown in Fig. \ref{fig:task}. Essentially, we propose a method to consider their effects as model constraints, which have an impact on the optimization of the loss function
\begin{align} \label{eq:kernel5}
\mathcal{L}(W;u)=\norm{Y-X \Xi}_2^2 + \beta \norm{\hat{u}-u}_2^2 + \lambda \norm{W}_0,
\end{align}
where $Y$ and $X$ are the temporal/spatial derivatives of the observation data $u$ as defined in Sec. 3.1, $\Xi$ is defined in Eqs. (5-8) as the estimated PDE coefficients, a function of $W$. Here, we use $\hat{u}$ to denote the prediction of the observation data based on the discovered PDE (determined by $\Xi$) and some physics knowledge $\Gamma$ (such as initial or boundary conditions). Thus, we can write it as $\hat{u}=f(\Xi, \Gamma)$, where $f$ can be any differentiable PDE solver.

\subsection{Iterative One-Out Sparse Regression}

We use an iterative one-out regression that filters out one $X_{;,i}$ which gives the least Akaike Information Criterion (AIC) \footnote{\url{https://link.springer.com/book/9789027722539}} at each iteration of coefficient estimation. If we use $M$ to denote the set of indexes of reserved coefficients, $M'$ to denote the a priori derivative terms (if there are any), the formula of AIC is used as follows
\begin{equation} \label{eq:regression1}
\text{A}(M) = 2 \sum_{i \in M} 1 - 2 \ln\ \norm{Y-\sum_{i \in M} X_{:,i} \Xi_i}_2^2 .
\end{equation}

The iteration ends when there are only $L$ coefficients left in the regression. This aims to filter out the most irrelevant $\xi_i$ that maximize the least square errors to avoid its intervention in estimating coefficients. The iterative one-out regression repeats
\begin{equation} \label{eq:regression2}
M=M-[i], \  \text{if} \  \text{A}(M-[i]) = \text{min}(\text{A}(M-[j])) \
\text{and} \ i \notin M',
\end{equation}
for $\forall \ j \in M$, and
\begin{align} \label{eq:regression3}
\widehat{W} &= \argmin\limits_{W}
\norm{Y-\sum_{i \in M} X_{:,i} \Xi_i}_2^2 + \beta \norm{\hat{u}-u}_2^2.
\end{align}

Iterative one-out regression is an approximation of sparse group regression (the weight regularization term in Eq. 9). It improves the accuracy in determining the nonzero $\xi_i$ as it avoids the problem with the interference of irrelevant terms in previous works \cite{rudy2019data}. The overall algorithm is expressed in Alg. \ref{alg}.

\begin{algorithm}[t] \label{alg}
    \caption{The physics-guided spatial kernel sparse regression approach.}
    \small
    \label{alg}
    \begin{algorithmic}[1]
        \REQUIRE A priori terms $M'$, target time derivative term $u_t(x,y,t)$ and candidate equation terms $\Theta(u)_i(x,y,t)$ w.r.t. $x \in [1,...,n]$, $y \in [1,...,m]$ and $t \in [1,...,h]$. $p$ that $M=[1,2,...,p]$, $i \in M$, $\lambda$, $\gamma$, $q$, $L$.
        \STATE For convenience, denote $u_t$ as $Y$ and denote $\Theta(u)_i$ as $X_{:,i}$. 
        \WHILE{$size(M) > L$}
            \STATE Compute $\widehat{W}$ by Eq. \ref{eq:regression3}, which is determined by Eqs. (\ref{eq:kernel1}-\ref{eq:kernel5})) in detail, using all $X_{:,i}$ and $Y$ with $i \in M$;
            \STATE Update $M$ based on Eqs. (\ref{eq:regression1}-\ref{eq:regression2});
        \ENDWHILE
        \STATE Compute $\widehat{W}_{best}$ by Eq. \ref{eq:regression3} using all $X_{:,i}$ and $Y$ with $i \in M$;
        \STATE Compute $\widehat{\Xi}_{best}$ by Eq. \ref{eq:kernel2} using $W=\widehat{W}_{best}$.
        \RETURN $M$, $\widehat{\Xi}_{best}$.
    \end{algorithmic}
\end{algorithm}

\subsection{Complexity Analysis} \label{sec:complexity}
In this section, we will show that the time complexity of our spatial kernel estimation scales linearly with regard to size of the dataset.

The size of the dataset can be determined by $n \times m \times h$. There is a hyperparameter $p$, the number of PDE terms. Another hyperparameter is $r$, the radius of the local area. The calculation of the spatial kernel itself has a complexity of $O(1)$. For the estimation of coefficients at each spatial coordinate $(x, y)$, we only allow adjacent data points $(x', y')$ within a local area $\| S(x,y)-S(x',y') \| < r$ to participate in the calculation; therefore, the number of data points involved does not scale up with the size of the dataset, but is determined by the constant $r$. Because we focus on the local smoothness here, $r$ is not a large number. The calculation of each coordinate is $O(hp)$, and there are $n \times m$ numbers of coordinates to calculate for a given dataset. Therefore, the total complexity is $O(nmh)$ if we discard the constant $p$ and the constant related to $r$. In all, the total complexity is linearly proportional to the size of the dataset.

\subsection{Theoretical Analysis} \label{sec:proof}

In this section, we introduce the theoretical analysis to demonstrate the advantages of our model. We provide several theorems in the following with proofs. The proposed spatial kernel estimation (see Eqs. (\ref{eq:kernel1}-\ref{eq:kernel4})) uses the spatial distance to estimate the probability density function of coefficients with nonlinearity. To help understand its advantage, we first introduce spatial averaging estimation with linearity. We intend to show that the spatial averaging estimation can have an estimation error upper bounded by the upper limit of coefficient difference between adjacent coordinates. The coefficient estimation error without such spatial averaging estimation, on the contrary, has no upper bound (i.e., can be fairly large). 

The local averaging estimation is defined as follows. For each $(x,y)$, the spatial averaging estimation considers all $(x',y')$ that $\norm{S(x, y)-S(x', y')} < r$ to compute
\begin{equation}
\widehat{W}_{(avg)} = \argmin\limits_{\Xi}
\norm{Y-X \Xi}_2^2,
\end{equation}
\begin{equation}
\Xi_{(avg)}^{[x,y]} = 
\frac{\sum_{(x',y')} W^{[x',y']}}{\sum_{(x',y')} 1}.
\end{equation}
where $W$ denotes model parameters of the averaging estimation. We use $\Xi_{(avg)}$ to denote the estimated coefficients and $\xi$ to denote the ground-truth coefficients. To understand the error introduced by averaging and/or kernel, we ignore the overfitting issue for the moment and suppose $\hat{W}$ can model $\xi$ perfectly. We introduce the Lipschitz continuity to express the local smoothness with a Lipschitz constant $\alpha \geq 0$, as introduced in Def. \ref{def:local-smoothing}. Here, we consider the upper limit of coefficient difference between adjacent coefficients within the local area for all x, y, x', and y' as 
\begin{equation}
\alpha \geq \frac{|\widehat{W}^{[x',y']} - \widehat{W}^{[x,y]}|}{\norm{S(x,y)-S(x',y')}} \geq \frac{|\widehat{W}^{[x',y']} - \widehat{W}^{[x,y]}|}{r}.
\end{equation}

In the worst case, we have all the coefficients on only one side with differences approximating $\alpha r$. Therefore, the upper bound of estimation error is 
\begin{equation}
\label{avg_error}
sup(|\widehat{\Xi}_{(avg)}^{[x,y]}-\xi^{[x,y]}|) = \widehat{W}^{[x,y]} - \frac{\sum_{(x',y')} \widehat{W}^{[x,y]} - \alpha r}{\sum_{(x',y')} 1} = \alpha r.
\end{equation}

While the spatial averaging coefficient estimation has a upper bound of coefficient estimation error, the spatially independent estimation in Eqs. (\ref{eq:saprse1}-\ref{eq:saprse2}) practiced by many baselines cannot guarantee to match the ground-truth coefficients even if Eq. \ref{eq:saprse2} is optimized due to the existence of many linearly dependent observations. We assume that the spatial averaging estimation can avoid this issue by using extra data from adjacent coordinates within the local area in the sacrifice of introducing the estimation error as described in Eq. \ref{avg_error}. We can also easily demonstrate that the local averaging estimation has a lower estimation error than the strategy practiced in A-DLGA \cite{xu2020deep} that makes coefficients grids coarser by merging grids within each spatial area into one grid, which also uses extra adjacent data to alleviate the issue caused by linearly independent observations. We formalize this in Theorem \ref{theorem1}. 

\begin{theorem}[Reduction on coefficient error by local averaging estimation]
\label{theorem1}
With respect to the local smoothness principle, the coefficients estimated by the spatial averaging estimation has strictly lower upper-bound coefficient estimation error than A-DLGA.
\end{theorem}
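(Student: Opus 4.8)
The plan is to compare the two strategies' worst-case (upper-bound) coefficient estimation errors on an equal footing, i.e.\ when both pool adjacent data from a local region of the same reach $r$, so that both combat the linear-dependence issue with the same amount of extra information. For the spatial averaging estimator the relevant upper bound is already in hand: because the point of interest $(x,y)$ lies at the \emph{center} of its own neighborhood, every contributing coordinate $(x',y')$ satisfies $\norm{S(x,y)-S(x',y')} < r$, so by the local smoothness (Lipschitz) property of Def.~\ref{def:local-smoothing} each contributing coefficient deviates from $\xi^{[x,y]}$ by at most $\alpha r$; averaging preserves this bound, giving $\sup|\widehat{\Xi}_{(avg)}^{[x,y]}-\xi^{[x,y]}| = \alpha r$ exactly as in Eq.~\ref{avg_error}.

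First I would formalize the A-DLGA estimator: grid coarsening merges an entire spatial block into a single cell and assigns one shared coefficient (the within-block aggregate) to every original coordinate inside that block. The decisive structural asymmetry is that the point of interest is no longer the center of the pooled region but merely one of its members, and in the worst case a boundary or corner coordinate. Consequently the distance from such a coordinate to the farthest coordinate that still feeds into its shared coefficient is no longer bounded by $r$ but by the block diameter, which for a region of reach $r$ is $2r$ (larger still for a square block).

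Next I would turn this geometric fact into an error bound through the same Lipschitz argument: for a boundary coordinate $(x,y)$ of a merged cell, a contributing $\xi^{[x',y']}$ can differ from $\xi^{[x,y]}$ by as much as $\alpha\cdot 2r$, so the shared coefficient deviates from $\xi^{[x,y]}$ by at most $2\alpha r$, and this value is approached by a Lipschitz-admissible field that decays linearly with slope $\alpha$ across the block. This yields $\sup|\widehat{\Xi}_{(\text{A-DLGA})}^{[x,y]}-\xi^{[x,y]}| = 2\alpha r$. Comparing the two suprema, $\alpha r < 2\alpha r$ holds strictly for every $\alpha > 0$ and $r > 0$, which is exactly the claimed strict reduction of the upper-bound estimation error.

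The hard part will be keeping the comparison genuinely fair and the strictness airtight. I must pin down that both estimators draw on a local region of identical reach $r$; otherwise A-DLGA could simply shrink its cells to lower its error, at the price of using less adjacent data and forfeiting the very benefit against rank-deficiency that motivates pooling. I must also verify that the worst-case A-DLGA configuration---estimation point at the block boundary together with a linearly varying coefficient field---is simultaneously Lipschitz-feasible and actually attained, so that $2\alpha r$ is the true supremum rather than a loose over-estimate. A secondary subtlety is to specify precisely what aggregate A-DLGA assigns to a merged cell (a block mean versus a single pooled regression under the perfect-modeling assumption) and to confirm that it is the center-versus-edge asymmetry, not some incidental averaging effect, that drives the gap.
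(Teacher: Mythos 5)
Your proposal is correct and follows essentially the same route as the paper's proof: both arguments observe that A-DLGA's pooled block places the estimation point up to $2r$ away from contributing coordinates (the paper via a triangle inequality through the block representative $(x'',y'')$, you via the block diameter), apply the Lipschitz property of Def.~\ref{def:local-smoothing} to obtain a worst-case error of $2\alpha r$ realized by a one-sided linearly varying field, and compare against the centered average's $\alpha r$. Your added caveats---fixing an equal reach $r$ for both estimators and noting that strictness requires $\alpha>0$---are sound refinements of points the paper leaves implicit, but they do not change the underlying argument.
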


\begin{proof}
Assume that the \textit{Local Smoothness Principle} in Definition \ref{def:local-smoothing} applies. For $\xi(x,y)$, we denote the estimated coefficients of A-DLGA as $\widehat{W}^{\dagger}(x,y)$. For each $(x,y)$, A-DLGA considers all $(x',y')$ that $\norm{S(x'', y'')-S(x', y')} < r$, where $\norm{S(x'', y'')-S(x, y)} < r$. The upper bound of estimation error should be the case where $\norm{S(x'', y'')-S(x, y)} = 2r$, so that the upper bound of coefficient difference would be $2 \alpha r$. Therefore, 
\begin{align}
\label{adlga_error}
sup(|\widehat{W}^{\dagger}-\xi|) = \widehat{W}^{[x,y]} - \frac{\sum_{(x',y')} \widehat{W}^{[x,y]} - 2\alpha r}{\sum_{(x',y')} 1} \\
= 2\alpha r > \alpha r = sup(|\widehat{\Xi}_{(avg)}-\xi|).
\end{align}
\end{proof}

Furthermore, we show that our spatial kernel estimation has a lower estimation error than the spatial averaging estimation by Theorem \ref{theorem2}. Our model is more accurate than local averaging estimation, so it is more accurate than A-DLGA \cite{xu2020deep}. 

\begin{theorem}[Reduction on coefficient error by local kernel]
\label{theorem2}
With respect to the local smooth principle, the coefficients estimated by the spatial kernel estimation has strictly lower coefficient estimation error than the spatial averaging estimation.
\end{theorem}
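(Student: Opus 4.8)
The plan is to mirror the worst-case analysis used for Theorem~\ref{theorem1}, but to compare the two estimators term-by-term on the \emph{same} neighborhood configuration rather than against the loose $\alpha r$ bound. Fix a center $(x,y)$ and enumerate its $k$ neighbors $(x'_j,y'_j)$ within radius $r$, writing $d_j=\norm{S(x,y)-S(x'_j,y'_j)}$ and $K_j=\exp(-d_j^2/2\gamma)$. Under the perfect-modeling assumption $W^{[x',y']}=\xi^{[x',y']}$ and the local smoothness of Def.~\ref{def:local-smoothing}, each neighbor satisfies $|\xi^{[x'_j,y'_j]}-\xi^{[x,y]}|\le \alpha d_j$. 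Any weighted-average estimator with nonnegative weights $w_j$ then has error $\bigl|\sum_j w_j(\xi^{[x'_j,y'_j]}-\xi^{[x,y]})\bigr|/\sum_j w_j$, whose supremum over admissible coefficient fields (all deviations pushed to $+\alpha d_j$ on one side) equals $\alpha\,\sum_j w_j d_j/\sum_j w_j$. This reduces the theorem to comparing the weighted mean distance $\sum_j w_j d_j/\sum_j w_j$ for uniform weights ($w_j\equiv 1$, averaging) against RBF weights ($w_j=K_j$, kernel).

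First I would record that $K_j=\exp(-d_j^2/2\gamma)$ is a strictly decreasing function of $d_j$ on $[0,r]$ for any finite $\gamma>0$, so the sequences $\{K_j\}$ and $\{d_j\}$ are oppositely ordered. Applying Chebyshev's sum inequality (equivalently, that the covariance of oppositely-monotone sequences is nonpositive) gives
\[
\frac{1}{k}\sum_j K_j d_j \;\le\; \Bigl(\frac{1}{k}\sum_j K_j\Bigr)\Bigl(\frac{1}{k}\sum_j d_j\Bigr),
\]
and dividing through by $\frac{1}{k}\sum_j K_j$ yields $\sum_j K_j d_j/\sum_j K_j \le \frac{1}{k}\sum_j d_j$, i.e. the kernel-weighted mean distance never exceeds the uniform mean distance. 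Multiplying by $\alpha$ converts this into the corresponding inequality between the worst-case coefficient errors of the kernel and averaging estimators.

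The remaining work, where the real content lies, is to upgrade this to a \emph{strict} inequality. Chebyshev's inequality is tight only when one sequence is constant, so I would argue that strictness holds precisely when the neighbor distances $\{d_j\}$ are not all equal: since the RBF kernel is strictly monotone, distinct distances yield distinct weights, the covariance between $\{K_j\}$ and $\{d_j\}$ is strictly negative, and the bound becomes strict. A self-contained way to see this is to expand $\sum_{j<l}(K_j-K_l)(d_j-d_l)$, each summand of which is $\le 0$ and is $<0$ whenever $d_j\neq d_l$; this simultaneously reproves the inequality and exhibits the strict deficit. On any genuine two-dimensional neighborhood (more than one distinct distance to the center) this condition holds, so the kernel estimator attains strictly smaller worst-case error, and chaining with Theorem~\ref{theorem1} shows it also dominates A-DLGA.

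The main obstacle I anticipate is not the inequality but pinning down the precise sense of ``estimation error'' so the comparison is fair: the averaging bound in the preceding discussion was evaluated as $\alpha r$ by forcing every neighbor to the boundary, whereas a meaningful strict comparison must hold the neighborhood configuration fixed across both estimators and track the deviation bound $\alpha d_j$ distance-by-distance. Once that bookkeeping is fixed, the strict improvement follows directly from strict monotonicity of the kernel, and I would flag the degenerate case of a single distinct distance---where the two estimators coincide and the inequality is tight---as the boundary of the claim.
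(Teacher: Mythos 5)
Your proof is correct and follows the same skeleton as the paper's own argument: both reduce the comparison to a kernel-weighted versus uniform average of a deviation that grows with distance, exploiting that the RBF weight $K_j=\exp(-d_j^2/2\gamma)$ is decreasing in $d_j$ while the smoothness-allowed deviation $\alpha d_j$ is increasing. The difference is one of rigor rather than strategy, and it is in your favor. The paper writes the error as $\frac{\sum K^{[x',y']}(\widehat{W}^{[x',y']}-\widehat{W}^{[x,y]})}{\sum K^{[x',y']}}$, asserts that this is ``$\propto$'' a kernel-weighted average of an unspecified increasing function $\delta(\norm{S(x,y)-S(x',y')})$, and then states the averaging inequality without justification --- the handling of signed differences, the passage to the worst case, and the comparison inequality itself are all left implicit. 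You supply exactly the missing pieces: the explicit worst-case evaluation $\sup = \alpha\sum_j w_j d_j/\sum_j w_j$ valid for any nonnegative weights (which makes the two estimators comparable on the same fixed neighborhood, rather than against the loose $\alpha r$ bound of the averaging discussion), and Chebyshev's sum inequality --- equivalently your pairwise expansion $\sum_{j<l}(K_j-K_l)(d_j-d_l)\le 0$ --- as the engine of the comparison. Most notably, your strictness analysis repairs a genuine defect in the paper's proof: the paper's displayed chain establishes only a non-strict ``$\leq$'', yet the theorem claims a strictly lower error. Your observation that strictness holds precisely when the neighbor distances are not all equal, and that the two estimators coincide when every neighbor is equidistant from the center, is both the correct completion of the argument and an honest delimitation of the theorem's actual scope that the paper glosses over.
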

\begin{proof}
Assume that the \textit{Local Smoothness Principle} in Definition \ref{def:local-smoothing} applies, we consider the estimation of coefficient $\xi(x,y)$. Because the coefficient function is a k-Lipschitz continuous function, the coefficient difference increases with spatial distance. We denote the estimated coefficients as $\Xi$. Note that the kernel in Eq. \ref{eq:kernel3} defined as $K^{[x',y']}=\exp(-\frac{\norm{S(x,y)-S(x',y')}_2^2}{2\gamma})$ decreases in the spatial distance, so closer coefficients give more contributions. Thus, 
\begin{align}
|\widehat{\Xi}^{[x,y]}-\xi^{[x,y]}| 
&= 
\frac{\sum K^{[x',y']} (\widehat{W}^{[x',y']} - \widehat{W}^{[x,y]})} 
{\sum K^{[x',y']}} \\
&\propto \frac{\sum K^{[x',y']} \delta(\norm{S(x,y)-S(x',y')})} 
{\sum K^{[x',y']}} \\
&\leq \frac{\sum \delta(\norm{S(x,y)-S(x',y')})} 
{\sum 1} \\
&= |\widehat{\Xi}_{(avg)}^{[x,y]}-\xi^{[x,y]}|.
\end{align}
The above equations and inequalities prove that the coefficient estimation error of spatial kernel estimation is strictly lower than the coefficient estimation error of spatial averaging estimation.
\end{proof}

\begin{table*}[t]
\centering
  \caption{Model performance of HIN-PDE under different noisy levels on PDEs with constant coefficients, evaluated by the recall rate, coefficient errors and fitting errors. All equations are correctly discovered, even with noise in data. }
  \label{tab:cc}
  \begin{tabular}{c c c c c c c c c c}
    \toprule
    Metrics & \multicolumn{3}{c}{Recall (\%)} & \multicolumn{3}{c}{Coef. Error ($\times 10^{-3})$} & \multicolumn{3}{c}{Fitting Error ($\times 10^{-3})$} \\ 
    \midrule
    Noise Level & 0\% & 10\% & 20\% & 0\% & 10\% & 20\% & 0\% & 10\% & 20\%\\ 
    \midrule
    Burgers' Equation & 100 & 100 & 100 & 2.603 & 6.124 & 6.946 & 0.205 & 0.356 & 1.004 \\
    \midrule
    KdV Equation & 100 & 100 & 100 & 1.417 & 7.385 & 14.36 & 3.729 & 187.8 & 375.5\\
    \midrule
    C-I Equation & 100 & 100 & 100 & 3.623 & 12.69 & 25.38 & 1.691 & 11.85 & 23.71\\
    \bottomrule
  \end{tabular}
\end{table*}

Moreover, the spatial kernel estimation reduces the coefficient estimation error caused by noise, which is proved in Theorem \ref{theorem3}. For a coefficient at a spatial coordinate $(x, y)$, its estimation is affected by both the noise in $Y$ and the noises in $Y$ at adjacent coordinates within the local area. We only discuss the estimation error caused by noise here and assume $\widehat{\Xi}=\widehat{W}=\xi$ if $\eta=0, \epsilon=0$. This allows us to discuss the noise effect alone without the error caused by the kernel discussed in Theorem \ref{theorem2}. We prove that the weighted addition of independent Gaussian noises by kernel has a lower error, and the error decreases with the number of adjacent coefficients.

\begin{theorem}[Reduction on coefficient error caused by noise]
\label{theorem3}
Assume that the coefficient estimation error is only caused by noise so that $\widehat{\Xi}=\widehat{W}=\xi$ if $\eta=0, \epsilon=0$, then we must have $|\widehat{\Xi} - \xi| < |\widehat{W} - \xi|$ if $\eta \neq 0, \epsilon \sim \eta \mathcal{N}(0, \sigma^2) \in \R^{h}$. 
\end{theorem}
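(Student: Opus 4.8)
The plan is to reduce the claim to a variance comparison for a weighted average of noise-induced errors. Under the hypothesis that $\widehat{\Xi}=\widehat{W}=\xi$ when $\eta=0$ and $\epsilon=0$, the entire deviation of $\widehat{W}$ from $\xi$ is attributable to the noise $\epsilon$. Since the underlying estimator in Eqs.~(\ref{eq:saprse1}--\ref{eq:saprse2}) is a (local) least-squares fit, the error is linear in $\epsilon$: writing $e^{[x',y']} := \widehat{W}^{[x',y']} - \xi^{[x',y']}$, each $e^{[x',y']}$ is a zero-mean Gaussian whose variance $\tau^2$ is proportional to $\eta^2\sigma^2$ (obtained from the closed form $(X^\top X)^{-1}X^\top \epsilon$). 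Because we are isolating the noise effect from the smoothing error already treated in Theorem~\ref{theorem2}, I would take $\xi^{[x',y']}=\xi^{[x,y]}$ over the local window, so that $\widehat{W}^{[x',y']}=\xi^{[x,y]}+e^{[x',y']}$ and the errors at distinct coordinates are independent (the noise realizations across spatial coordinates are independent) with common variance $\tau^2$.

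First I would substitute into the kernel estimator of Eq.~(\ref{eq:kernel2}) to write the kernel error as a weighted average of the local noise terms,
\begin{equation}
\widehat{\Xi}^{[x,y]}-\xi^{[x,y]}
=\frac{\sum K^{[x',y']}\,e^{[x',y']}}{\sum K^{[x',y']}},
\end{equation}
whereas the un-averaged error is simply $\widehat{W}^{[x,y]}-\xi^{[x,y]}=e^{[x,y]}$. Next, using independence and the common variance $\tau^2$, I would compute the variance of the weighted average,
\begin{equation}
\mathrm{Var}\!\left(\widehat{\Xi}^{[x,y]}-\xi^{[x,y]}\right)
=\tau^2\,\frac{\sum \bigl(K^{[x',y']}\bigr)^2}{\bigl(\sum K^{[x',y']}\bigr)^2}.
\end{equation}
The crux is then the elementary inequality $\bigl(\sum K^{[x',y']}\bigr)^2\ge \sum \bigl(K^{[x',y']}\bigr)^2$ for nonnegative kernel weights, which holds because every cross term $K^{[x',y']}K^{[x'',y'']}$ is strictly positive (the RBF kernel of Eq.~(\ref{eq:kernel3}) is everywhere positive). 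The inequality is strict whenever the local window contains more than one coordinate, giving $\mathrm{Var}(\widehat{\Xi}-\xi)<\tau^2=\mathrm{Var}(\widehat{W}-\xi)$, which I would read as the claimed inequality $|\widehat{\Xi}-\xi|<|\widehat{W}-\xi|$ in the expected-magnitude (variance) sense.

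I expect the main obstacle to be the interpretation of the stated inequality rather than the algebra: as written, $|\widehat{\Xi}-\xi|<|\widehat{W}-\xi|$ cannot hold for every realization of $\epsilon$, since a particular draw of the averaged noise may happen to exceed the single-point noise. I would therefore make explicit that the comparison is in expectation (equivalently, of variances or of mean-squared error), and note that strictness requires at least two coordinates in the local window together with the strict positivity of the RBF kernel. A secondary care point is justifying the independence and equal-variance assumptions on the $e^{[x',y']}$; these follow from the spatial independence of $\epsilon$ and the translation invariance of the candidate-term matrix $X$ within the local window, and I would state them explicitly as the standing assumptions under which the theorem is proved.
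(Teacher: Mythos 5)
Your proposal is correct (under the in-expectation reading you make explicit) and follows essentially the same route as the paper's own proof: both express the kernel estimate's error as the kernel-weighted average $\sum K^{[x',y']}\epsilon_{(x',y')}/\sum K^{[x',y']}$ of i.i.d.\ Gaussian noises at neighboring coordinates and compare it against the single-point noise. Two differences in execution are worth recording, and both favor your version. First, where you neutralize the smoothing bias by taking $\xi$ locally constant over the window, the paper instead extracts from the hypothesis $\widehat{\Xi}=\widehat{W}=\xi$ (at $\epsilon=0$) a symmetric-pairing condition --- for every neighbor there is an equidistant neighbor whose bias offsets it --- which serves the same purpose; your standing assumption is simpler and equally legitimate for isolating the noise effect. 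Second, your variance computation is the rigorous one: the reduction factor is $\sum \bigl(K^{[x',y']}\bigr)^2/\bigl(\sum K^{[x',y']}\bigr)^2 < 1$, strict as soon as the window contains at least two points, by positivity of the RBF cross terms. The paper instead asserts the averaged error is distributed as $\frac{\eta}{\sum_{(x',y')} 1}\,\mathcal{N}(0,\sigma^2)$ and concludes the error ratio equals the neighbor count $\sum_{(x',y')} 1$; that exact factor does not follow from the weighted-average algebra (even equal weights yield only a $\sqrt{N}$ reduction in standard deviation, and unequal kernel weights yield less), so your weaker but correct bound preserves the theorem's qualitative conclusion while repairing its quantitative claim. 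Your final caveat --- that the pointwise inequality $|\widehat{\Xi}-\xi|<|\widehat{W}-\xi|$ cannot hold for every noise realization and must be read in the variance or expected-magnitude sense --- identifies a real imprecision in the paper's statement that its proof silently elides.
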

\begin{proof}
Consider $\epsilon \sim \eta \mathcal{N}_h(0, \sigma^2)$ in estimation that $\widehat{W} = (X^TX+\lambda I)X^T(Y+\epsilon)$. $|\widehat{W} - \xi| = (X^TX+\lambda I)X^T\epsilon$. We assume $\widehat{\Xi}=\widehat{W}=\xi$ if $\eta=0, \epsilon=0$, which means for each $\widehat{\Xi}^{[x',y']}$ there must always be another $\widehat{\Xi}^{[x'',y'']}$ such that 1) they have the same distance to $\widehat{\Xi}^{[x,y]}$ so they have the same kernel value, i.e. $\norm{S(x',y')-S(x,y)}=\norm{S(x'',y'')-S(x,y)}$, and 2) they are symmetrical to the value of $\xi^{[x,y]}$ so that their biases can be offset, i.e. $\xi^{[x, y]}-\xi_i^{[x', y']} = \xi^{[x'', y'']}-\xi^{[x, y]}$. Then, for all $(x',y')$ that $\norm{S(x, y)-S(x', y')} < r$, the estimated coefficients should be
\begin{align}
|\widehat{\Xi}^{[x,y]} - \xi^{[x, y]}| &=
\frac{\sum_{(x',y')} K^{[x',y']} \epsilon_{(x',y')}}
{\sum_{(x',y')} K^{[x',y']}}.
\end{align}
For each $\epsilon_{(x',y')} \sim \eta N(0, \sigma^2)$ that is i.i.d., 
as $\sum \epsilon \sim \eta N(0, \sum \sigma_i^2)$, we have $|\widehat{\Xi}^{[x,y]} - \xi^{[x, y]}| \sim \eta N(0, \sum_{(x',y')} \frac{K^{[x',y']}\sigma^2}{K^{[x',y']}})$.
As each $\sigma$ is the same for all adjacent coefficients within the area, we have $|\widehat{\Xi}^{[x,y]} - \xi^{[x, y]}| \sim \frac{\eta}{\sum_{(x',y')} 1} N(0, \sigma^2)$. However, $|\widehat{W}^{[x,y]} - \xi^{[x, y]}| \sim \eta N(0, \sigma^2)$.
Therefore, $|\widehat{\Xi} - \xi| < |\widehat{W} - \xi|$ and $\frac{|\widehat{W} - \xi|}{|\widehat{\Xi} - \xi|} = \sum_{(x',y')} 1$. 
\end{proof}

\begin{table*}[t]
\centering
  \caption{Model comparison among SGTRidge, PDE-Net, A-DLGA and HIN-PDE on PDEs with variable coefficients. The percentage numbers indicate the noisy levels in the data. Among the compared methods, HIN-PDE is the only one that is able to correctly discover highly nonlinear parametric PDEs.}
  \label{tab:fitting}
  \begin{tabular}{c c c c c c c c}
    \toprule
    \multirow{2}*{Dataset} & \multirow{2}*{Metric} & SGTRidge & PDE-Net & A-DLGA & \multicolumn{3}{c}{HIN-PDE} \\ 
    ~ & ~ & 5\% & 5\% & 5\% &  5\% & 10\% & 20\%\\
    \midrule
    \multirow{4}*{1-HNC} & Train Err $(\times 10^{-3})$ & $1.148$ & $2.190$ & $16.70$ & $3.314$ & $6.283$ & $11.88$ \\
    ~ & Dev Err  $(\times 10^{-3})$ & $3.907$ & $10.85$ & $47.39$ & $3.919$ & $6.373$ & $12.10$ \\ 
    ~ & Test Err  $(\times 10^{-3})$ & $28.25$ & $31.80$ & $136.8$ & \bf{3.686} & $6.367$ & $12.29$\\ 
    ~ & Recall (\%) & 50 & 50 & 25 & \bf{100} & 100 & 75  \\ 
    \midrule
    \multirow{4}*{2-HNC} & Train Err  $(\times 10^{-3})$  & $2.283$ & $2.697$ & $19.83$ & $3.794$ & $5.676$ & 8.329\\
    ~ & Dev Err  $(\times 10^{-3})$ & $26.87$ & $42.23$ & $43.04$ & $3.794$ & $5.674$ & 8.332\\ 
    ~ & Test Err $(\times 10^{-3})$ & $106.1$ & $169.2$ & $123.8$ & \bf{3.585} & $5.499$ & 8.318 \\ 
    ~ & Recall (\%) & 25 & 25 & 25 & \bf{100} & 100 & 75 \\ 
    \midrule
    \multirow{4}*{3-HNC} & Train Err  $(\times 10^{-3})$ & $0.134  $ & $0.129  $ & $1.033  $ & $0.331  $ & $0.583$  & $0.969$\\
    ~ & Dev Err  $(\times 10^{-3})$ & $1.588  $ & $1.563  $ & $2.661  $ & $0.342 $ & $0.588$ & $0.997$  \\ 
    ~ & Test Err  $(\times 10^{-3})$ & $9.095  $ & $9.005  $ & $7.872  $ & \bf{0.343} & $0.589$ & $1.018$ \\ 
    ~ & Recall (\%) & 25 & 25 & 50 & \bf{100} & 100 & 75  \\ 
    \midrule
    \multirow{4}*{4-HNC} & Train Err  $(\times 10^{-3})$ & $0.336  $ & $0.301  $ & $21.50  $ & $1.733  $  & 2.235 & 3.662\\
    ~ & Dev Err  $(\times 10^{-3})$ & $11.52  $ & $10.13  $ & $37.61  $ & $1.729  $ & 2.302 & 4.037 \\ 
    ~ & Test Err  $(\times 10^{-3})$ & $94.47  $ & $85.60  $ & $150.0  $ & \bf{1.703} & 2.521 & 7.505\\ 
    ~ & Recall (\%) & 0 & 0 & 0 & \bf{100} & 75 & 25  \\ 
    \midrule
    \multirow{4}*{5-HNC} & Train Err  $(\times 10^{-3})$ & $1.218  $ & $1.506  $ & $20.66  $ & $1.940  $ & 3.139 & 4.438\\
    ~ & Dev Err  $(\times 10^{-3})$ & $6.624  $ & $7.508  $ & $42.72  $ & $1.984  $ & 2.984 & 4.254\\ 
    ~ & Test Err  $(\times 10^{-3})$ & $13.63  $ & $15.17  $ & $109.6  $ & \bf{1.733} & 3.049 & 4.345  \\ 
    ~ & Recall (\%) & 50 & 50 & 50 & \bf{100} & 100 & 100  \\ 
    \bottomrule
  \end{tabular}
\end{table*}

\section{Experiments} \label{sec:experiments}
We conduct experiments\footnote{\url{https://github.com/yingtaoluo/Highly-Nonlinear-PDE-Discovery}} on PDEs with both constant and variable coefficients to demonstrate the effectiveness of HIN-PDE.

\subsection{Experimental Setting} \label{subsec:setting}

We first introduce the settings of our experiemtns.

\subsubsection{Setup}
Our experiments aim to discover PDEs terms and coefficients.
The proposed method is compared with PDE-net \cite{long2018pde}, Sparse Regression (we compare with SGTRidge \cite{rudy2019data} here and SINDy \cite{brunton2016discovering, champion2019data} is also an example of sparse regression) and A-DLGA \cite{xu2020deep}. We split the first 30\% data in the time axis as the training set, the next 30\% data as the development set, and the last 40\% as the test set.
In App. \ref{sec:appendix_hyperparameter_robustness}, for each model on each dataset, we tune the hyperparameters, i.e. $\gamma$, $q$ and $\lambda$, via grid search, so that it has the lowest target $u_t$ fitting error on the development set. 

\subsubsection{Datasets}
To verify how well the proposed model performs on the discovery of PDEs-CC, we consider the Burgers' equation, the Korteweg-de Vries (KdV) equation, and the Chaffe-Infante equation. 
For PDEs-VC, we consider the governing equation of underground seepage, in which we have five cases are spatiotemporal 3D PDEs with different highly nonlinear coefficient fields (see Fig. \ref{fig:coefficient}(d)) that are hard to express in mathematics explicitly. 
The data details of PDEs-VC are provided in App. \ref{sec:appendix_data_statistics}.

\subsubsection{Evaluation Metrics}
We adopt three metrics for evaluation: 
\begin{enumerate}
\item Recall of the discovered PDE terms compared to ground truth;
\item The mean absolute error of coefficient $\xi$ estimation;
\item The mean absolute error of target $u_t$ fitting.
\end{enumerate}
The recall rate of PDE terms and the coefficient estimation error indicate whether the discovered PDE is close to the ground-truth PDE that can generalize to future responses with the correct physical mechanism. Target fitting error tests how well the discovered PDE generalizes to the target $u_t$.
Moreover, we inject levels of noise to the data to verify the robustness of PDE discovery methods.

\subsection{Results on PDEs with constant coefficients} \label{sec:results}

We first investigate whether HIN-PDE can correctly discover PDEs with constant coefficients, under different levels of noise.
Results are shown in Tab. \ref{tab:cc}.
Specifically, we use irregular samples to simulate sparsity in the continuous space.
We observe from the results that, our method performs well even with noisy, 3D, irregularly sampled data and multiple physical fields. 

\subsubsection{Burgers’ Equation}
We consider the discovery of the spatiotemporal 3D Burgers' equation with two physical fields $u$ and $v$. For the sparse regression, we prepare a group of candidate functions that consist of polynomial terms $\{1, u, v, u^2, uv, v^2\}$, derivatives $\{1, u_x, u_y, v_x, v_y, \Delta u, \Delta v\}$ and their combinations.
The ground-truth formulation of the Burgers’ equation is
\begin{equation}
\begin{aligned}
u_t &= 0.005 u_{xx} + 0.005 u_{yy} - uu_x - vu_y,\\
v_t &= 0.005 v_{xx} + 0.005 v_{yy} - uv_x - vv_y.
\end{aligned}
\end{equation}
Following the physics-guided learning, we set diffusion terms as known a priori. The dimensionality of the dataset is $100 \times 100 \times 200$. We irregularly sample 10000 data and add 10\% Gaussian noise. As shown in Tab. \ref{tab:cc}, the Burgers’ equation is accurately discovered, under different noisy levels.

\subsubsection{Korteweg-de Vries (KdV) Equation}
We consider the discovery of spatiotemporal 2D Korteweg-de Vries (KdV) equation. We prepare a group of candidate functions that consist of polynomial terms $\{1, u, u^2\}$, derivatives $\{1, u_x, u_{xx}, u_{xxx}\}$ and their combinations.
The ground-truth formulation of the KdV equation is
\begin{equation}
u_t = - uu_x - 0.0025 u_{xxx}.
\end{equation}
The dimensionality of the dataset is $512 \times 201$. We irregularly sample 5000 data and add 10\% Gaussian noise. As shown in Tab. \ref{tab:cc}, the KdV equation is accurately discovered, under different levels of noise.

\subsubsection{Chaffe-Infante (C-I) Equation}
We consider the discovery of spatiotemporal 2D Chaffe-Infante equation. We prepare a group of candidate functions that consist of polynomial terms $\{1, u, u^2, u^3\}$, derivatives $\{1, u_x, u_{xx}\}$ and their combinations.
The ground-truth formulation of the C-I equation is
\begin{equation}
u_t = u_{xx} - u + u^3.
\end{equation}
The dimensionality of the dataset is $301 \times 201$. We irregularly sample 5000 data and add 10\% Gaussian noise. As shown in Tab. \ref{tab:cc}, the C-I equation is accurately discovered, under different levels of noise.

\begin{figure}[t]
\centering
\includegraphics[width=1\linewidth]{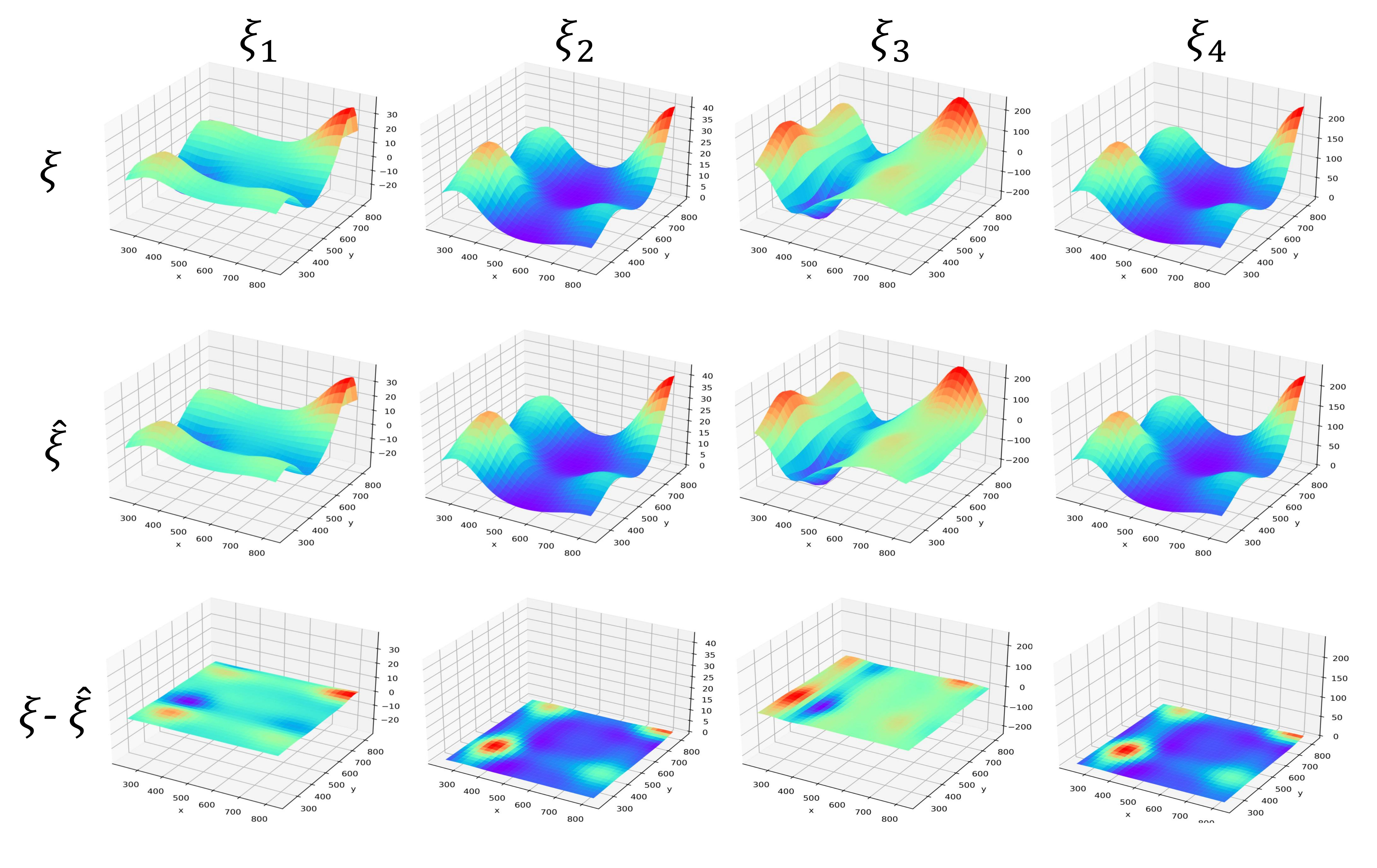}
\caption{Comparison of estimated and correct nonzero coefficients of 1-HNC. Three rows represent the results of Ground Truth, HIN-PDE and their residual errors, respectively. Four columns represent four $\xi_i$ that are nonzero in the ground-truth PDE queation. We have the residual errors as $|\xi-\hat{\xi}|/|\xi|=1\%$, which indicates the errors of estimated coefficients are extremely small, and HIN-PDE accurately fits the PDEs with highly nonlinear coefficients.}
\label{est_coef}
\end{figure}

\subsection{Results on PDEs with variable coefficients} \label{sec:results}

Then, we conduct performance comparison on PDEs with variable coefficients.
We consider the discovery of the spatiotemporal 3D governing equation of underground seepage with five different highly nonlinear variable coefficients, namely 1-HNC, 2-HNC, 3-HNC, 4-HNC and 5-HNC.
We prepare a group of candidate functions that consist of polynomial terms $\{1, u, u^2\}$, derivatives $\{1, u_x, u_{xx}, u_{xxx}\}$ and their combinations.
The correct formulation shall be
\begin{equation}
u_t = \hat{\xi}_1 u_x + \hat{\xi}_2 u_y + \hat{\xi}_3 u_{xx} + \hat{\xi}_4 u_{yy},
\end{equation}
where $\xi$ is the coefficient that can not be explicitly expressed.
And more details can be found in App. \ref{sec:appendix_data_statistics}.
The dimensionality of the fives cases are all $50 \times 50 \times 51$. We irregularly sample 10000 data and add 5\% Gaussian noise.
The recall of terms and target $u_t$ fitting errors are shown in Tab. \ref{tab:fitting}. Our model can discover the terms correctly from noisy and irregularly sampled sparse data and can generalize to future data for all the five cases with highly nonlinear coefficients.
The coefficient estimation of 1-HNC is visualized in Fig. \ref{est_coef} as an example, with the visualizations of more cases in App. \ref{sec:more} showing that the relative coefficient estimation error is no more than 1\%.
Apparently, our method is the only one that can correctly discover PDEs with variable coefficients.
These results strong demonstrate the effectiveness of HIN-PDE, and for the first time we are able to discover highly nonlinear parametric PDEs.

The discovered PDEs contain convection terms and diffusion terms along spatial dimensions, which align well with the ground-truth PDEs of underground seepage derived from the conservation of mass and Darcy's law \cite{cooper1966equation}. On the contrary, all the other baselines render false PDE terms; the test target fitting errors of baselines are much larger than their training errors, reflecting overfitting. The test fitting errors of our model are much smaller than baselines, showing that our model effectively reduces the estimation error. To investigate the robustness of our method, we include results under noisy levels from 5\% to 20\%. In most previous works for PDEs-CC \cite{rudy2017data, champion2019data, rao2021discovering} or PDEs-VC \cite{rudy2019data, long2018pde, li2020robust}, the model robustness to 5\% or 10\% noise levels is verified. Comparatively, the noise scale we test for our model is fairly large. We show that our model performs well for most cases under 10\% noise. When the noisy level goes up to 20\%, in some cases one out of four PDE terms discovered would be wrong as
\begin{equation}
u_t = \hat{\xi}_1 \underline{uu_x} + \hat{\xi}_2 u_y + \hat{\xi}_3 u_{xx} + \hat{\xi}_4 u_{yy},
\end{equation}
or
\begin{equation}
u_t = \hat{\xi}_1 u_x + \hat{\xi}_2 u_y + \hat{\xi}_3 u_{xx} + \hat{\xi}_4 \underline{uu_{yy}},
\end{equation}
where the underlined terms are the wrongly predicted terms.
Moreover, we find that even under extremely large noise, our model can discover PDEs that can generalize well to future data on test sets, since $uu_x$ and $u_x$ are very similar when $u$ is not rapidly changing along spatial dimensions.
Moreover, we find that the model performs well in a wide range of hyperparameters, with details in App. \ref{sec:appendix_hyperparameter_robustness}. Overall, our model shows excellent robustness against overfitting, especially with sparse and noisy data.

\section{Conclusion and Future Work} \label{sec:conclusion}

How to discover Partial Differential Equations (PDEs) with highly nonlinear variable coefficients from sparse and noisy data is an important task.
To address the overfitting of coefficients caused by data quality issues in previous baselines, we propose a physics-guided spatial kernel estimation in sparse regression that aligns well with the local smooth principle in PDEs and conservation laws.
The proposed model incorporates physical principles into a nonlinear smooth kernel to model the highly nonlinear coefficients.
We theoretically prove that it strictly reduces the coefficient estimation error of previous baselines and is also more robust against noise.
With spatial coordinates of coefficients, the model can apply to mesh-free spatiotemporal data without grids.
In experiments, it demonstrates the ability to find various PDEs from sparse and noisy data.
More importantly, it for the first time reports the discovery of PDEs with highly nonlinear coefficients, while previous baselines yield false results.
Our model performs well with a wide range of hyperparameters and noise level up to 20\%. With the state-of-the-art performance, our method brings hope to discover complex PDEs that comply with the continuously differentiable and local smoothness principles to help scientists understand unknown complex phenomena.

In the future, how to avoid the intervention of correlated similar terms and improve the accuracy of differentiation remain important. Our method works for PDEs that comply with the principles, but may remain intractable for more rarely complex coefficient fields. Also, how to discover equations without the prior knowledge of time-dependent target term is not discussed yet.

\begin{acks}
This work is partially supported by National Natural Science Foundation of China (62206291 and 62106116).
\end{acks}

\clearpage
\bibliographystyle{ACM-Reference-Format}
\balance
\bibliography{paper}

\clearpage

\appendix

\section{Data statistics}
\label{sec:appendix_data_statistics}
We introduce the governing equation of underground seepage in the following. The subsurface flows with different coefficients are taken as (1-5)-HNCs. The governing equation for the data is:
\begin{align}
S_s \frac{\partial u}{\partial t} = \frac{\partial}{\partial x}(K(x,y)\frac{\partial u}{\partial x}) + \frac{\partial}{\partial y}(K(x,y)\frac{\partial u}{\partial y}).
\end{align}
where $S_s$ denotes the specific storage; $K(x, y)$ denotes the hydraulic conductivity field; and $u$ denotes the hydraulic head. $u$ is the physical field and $K$ is the coefficient field. The same equation is also used by PDE-Net \cite{long2018pde} but its coefficient field is much simpler. The hydraulic conductivity field $K(x, y)$ in the governing equation is set to be heterogeneous to simulate real situations in practice, which is random fields with higher complexity following a specific distribution with corresponding covariance \cite{zhang2004efficient, huang2001convergence, zhang2001stochastic, wang2020deep}. 

In detail, a two-dimensional transient saturated flow in porous medium is considered. The domain is evenly divided into $51 \times 51$ grid blocks and the length in both directions is 1020 [L], where [L] denotes a length unit. The left and right boundaries are set as constant pressure boundaries and the hydraulic head takes values of $H_{x=0}=202$ [L] and $H_{x=1020}=200$ [L], respectively. Furthermore, the two lateral boundaries are assigned as no-flow boundaries. The specific storage is set as 0.0001. The total simulation time is 10 [T], where [T] denotes any consistent time unit, with each time step being 0.2 [T], resulting in 50 time steps. The initial conditions are $H_{t=0, x=0}=202$ [L] and $H_{t=0, x \neq 0}=200$ [L]. The mean and variance of the log hydraulic conductivity are given as 0 and 1, respectively. 
In addition, the correlation length of the field is $408$ [L]. The hydraulic conductivity field is parameterized through KLE with 20 basis terms. An example of conductivity field is shown in Fig. \ref{S1}(a), which exhibits strong anisotropy. The MODFLOW software is adopted to perform the simulations to obtain the dataset as exemplified in Fig. \ref{S1}(b) and (c).

\begin{figure}[H]
\centering
\includegraphics[width=1\linewidth]{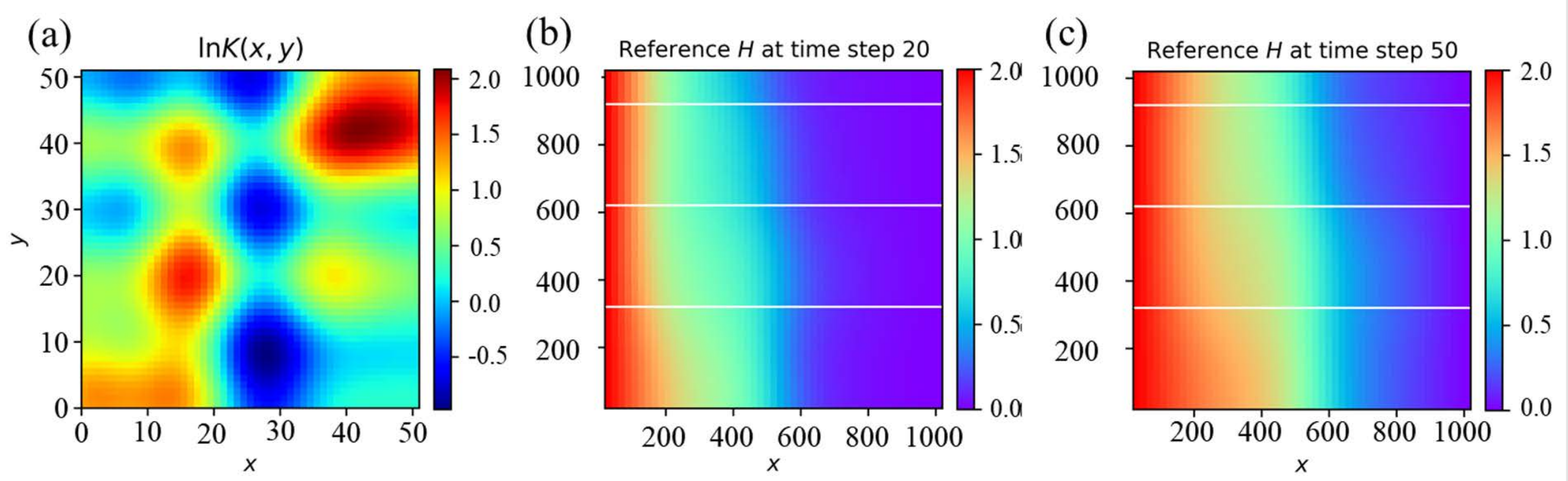}
\caption{Hydraulic conductivity (K) pressure field.}
\label{S1}
\end{figure}


\section{Hyperparameter Study}
\label{sec:appendix_hyperparameter_robustness}
In addition to the main experiment and the robustness experiment, we also conduct a hyperparameter analysis. We set the radius $r$ within [2, 5, 10] and set the $\gamma$ value of the Gaussian kernel within [0.03, 0.1, 0.3, 1].
We show the full results of hyperparameter analysis on (1-5)-HNCs in Tabs. \ref{hyper1}-\ref{hyper5} respectively.
When $\gamma \rightarrow 0$, the kernel estimation is equivalent to the local averaging introduced in Sec. \ref{sec:proof}. When $\gamma \rightarrow \infty$, the kernel estimation degrades to separate regression at each coordinate. The value of $\gamma$ decreases when the coefficient error increases, which aligns well with Theorems \ref{theorem2} and \ref{theorem3} in Sec. \ref{sec:proof} that local averaging estimation has larger error. 

A wide range of hyperparameters can all give the correct PDE structure. The optimal values of both $r$ and $\gamma$ should be tuned in real practice. If the radius is too large, the kernel will not be “local” to match the principle. The hyperparameters we use in the main experiments are $r=10, \gamma=1$.

\begin{table}[H]
\centering
  \caption{Performance w.r.t. hyperparameters on 1-HNC.}
  \label{hyper1}
  \begin{tabular}{c c c}
    \toprule
    Hyperparameters & Recall & Coefficient error \\
    \midrule
    $r=10, \gamma=1$ & 100\% & 0.7090 \\
    $r=10, \gamma=0.3$ & 100\% & 2.2296 \\
    $r=10, \gamma=0.1$ & 100\% & 5.2279 \\
    $r=10, \gamma=0.03$ & 100\% & 8.2493 \\
    $r=5, \gamma=1$ & 100\% & 0.5962 \\
    $r=5, \gamma=0.3$ & 100\% & 1.4790 \\
    $r=5, \gamma=0.1$ & 100\% & 2.0122 \\
    $r=5, \gamma=0.03$ & 100\% & 2.2271 \\
    $r=2, \gamma=1$ & 100\% & 1.7029 \\
    $r=2, \gamma=0.3$ & 100\% & 2.6926 \\
    $r=2, \gamma=0.1$ & 100\% & 3.0050 \\
    $r=2, \gamma=0.03$ & 100\% & 3.1153 \\
    \bottomrule
  \end{tabular}
\end{table}

\begin{table}[H]
\centering
  \caption{Performance w.r.t. hyperparameters on 2-HNC.}
  \label{hyper2}
  \begin{tabular}{c c c}
    \toprule
    Hyperparameters & Recall & Coefficient error \\
    \midrule
    $r=10, \gamma=1$ & 100\% & 0.1183 \\
    $r=10, \gamma=0.3$ & 100\% & 0.3749 \\
    $r=10, \gamma=0.1$ & 100\% & 0.8757 \\
    $r=10, \gamma=0.03$ & 100\% & 1.3561 \\
    $r=5, \gamma=1$ & 100\% & 0.5962 \\
    $r=5, \gamma=0.3$ & 100\% & 1.4790 \\
    $r=5, \gamma=0.1$ & 100\% & 2.0122 \\
    $r=5, \gamma=0.03$ & 100\% & 2.2271 \\
    $r=2, \gamma=1$ & 100\% & 1.7029 \\
    $r=2, \gamma=0.3$ & 100\% & 2.6926 \\
    $r=2, \gamma=0.1$ & 100\% & 3.0050 \\
    $r=2, \gamma=0.03$ & 100\% & 3.1153 \\
    \bottomrule
  \end{tabular}
\end{table}

\begin{table}[H]
\centering
  \caption{Performance w.r.t. hyperparameters on 3-HNC.}
  \label{hyper3}
  \begin{tabular}{c c c}
    \toprule
    Hyperparameters & Recall & Coefficient error \\
    \midrule
    $r=10, \gamma=1$ & 100\% & 0.0272 \\
    $r=10, \gamma=0.3$ & 100\% & 0.0874 \\
    $r=10, \gamma=0.1$ & 100\% & 0.2093 \\
    $r=10, \gamma=0.03$ & 100\% & 1.3561 \\
    $r=5, \gamma=1$ & 100\% & 0.0206 \\
    $r=5, \gamma=0.3$ & 100\% & 0.0514 \\
    $r=5, \gamma=0.1$ & 100\% & 0.0701 \\
    $r=5, \gamma=0.03$ & 100\% & 0.0776 \\
    $r=2, \gamma=1$ & 100\% & 0.0933 \\
    $r=2, \gamma=0.3$ & 100\% & 0.1478 \\
    $r=2, \gamma=0.1$ & 100\% & 0.1650 \\
    $r=2, \gamma=0.03$ & 100\% & 0.1711 \\
    \bottomrule
  \end{tabular}
\end{table}

\begin{table}[H]
\centering
  \caption{Performance w.r.t. hyperparameters on 4-HNC.}
  \label{hyper4}
  \begin{tabular}{c c c}
    \toprule
    Hyperparameters & Recall & Coefficient error \\
    \midrule
    $r=10, \gamma=1$ & 100\% & 0.0416 \\
    $r=10, \gamma=0.3$ & 100\% & 0.1317 \\
    $r=10, \gamma=0.1$  & 100\% & 0.3103 \\
    $r=10, \gamma=0.03$  & 100\% & 0.4879 \\
    $r=5, \gamma=1$  & 100\% & 0.0330 \\
    $r=5, \gamma=0.3$  & 100\% & 0.0823 \\
    $r=5, \gamma=0.1$  & 100\% & 0.1121 \\
    $r=5, \gamma=0.03$  & 100\% & 0.1242 \\
    $r=2, \gamma=1$  & 100\% & 0.1046 \\
    $r=2, \gamma=0.3$  & 100\% & 0.1656 \\
    $r=2, \gamma=0.1$  & 100\% & 0.1848 \\
    $r=2, \gamma=0.03$  & 100\% & 0.1916 \\
    \bottomrule
  \end{tabular}
\end{table}

\begin{table}[H]
\centering
  \caption{Performance w.r.t. hyperparameters on 5-HNC.}
  \label{hyper5}
  \begin{tabular}{c c c}
    \toprule
    Hyperparameters & Recall & Coefficient error \\
    \midrule
    $r=10, \gamma=1$ & 100\% & 0.0270 \\
    $r=10, \gamma=0.3$  & 100\% & 0.0871 \\
    $r=10, \gamma=0.1$  & 100\% & 0.2169 \\
    $r=10, \gamma=0.03$  & 100\% & 0.3618 \\
    $r=5, \gamma=1$  & 100\% & 0.0304 \\
    $r=5, \gamma=0.3$  & 100\% & 0.0756 \\
    $r=5, \gamma=0.1$  & 100\% & 0.1030 \\
    $r=5, \gamma=0.03$  & 100\% & 0.1140 \\
    $r=2, \gamma=1$  & 100\% & 0.1156 \\
    $r=2, \gamma=0.3$  & 100\% & 0.1828 \\
    $r=2, \gamma=0.1$  & 100\% & 0.2039 \\
    $r=2, \gamma=0.03$  & 100\% & 0.2114 \\
    \bottomrule
  \end{tabular}
\end{table}

\section{Visualization}
\label{sec:more}
In this section, we visualize the estimated coefficients in HIN-PDE and compare them with the ground-truth coefficients. The visualized residual errors of the estimated coefficients show that the proposed our model is very accurate in coefficient estimation. 
The results of 1-HNC is already shown in Fig. \ref{est_coef}, so we only show the results of the other four cases of the governing equation of underground seepage here.
To be noted, the estimated coefficients of all datasets are all obtained with the optimal hyperparameters tuned for each dataset.
The visualizations on (2-5)-HNCs are shown in Figs. \ref{est4}-\ref{est7} respectively.
The conclusion stays the same as in Fig. \ref{est_coef}, which shows the effectiveness of our method.

\begin{figure}[H]
\centering
\includegraphics[width=0.8\linewidth]{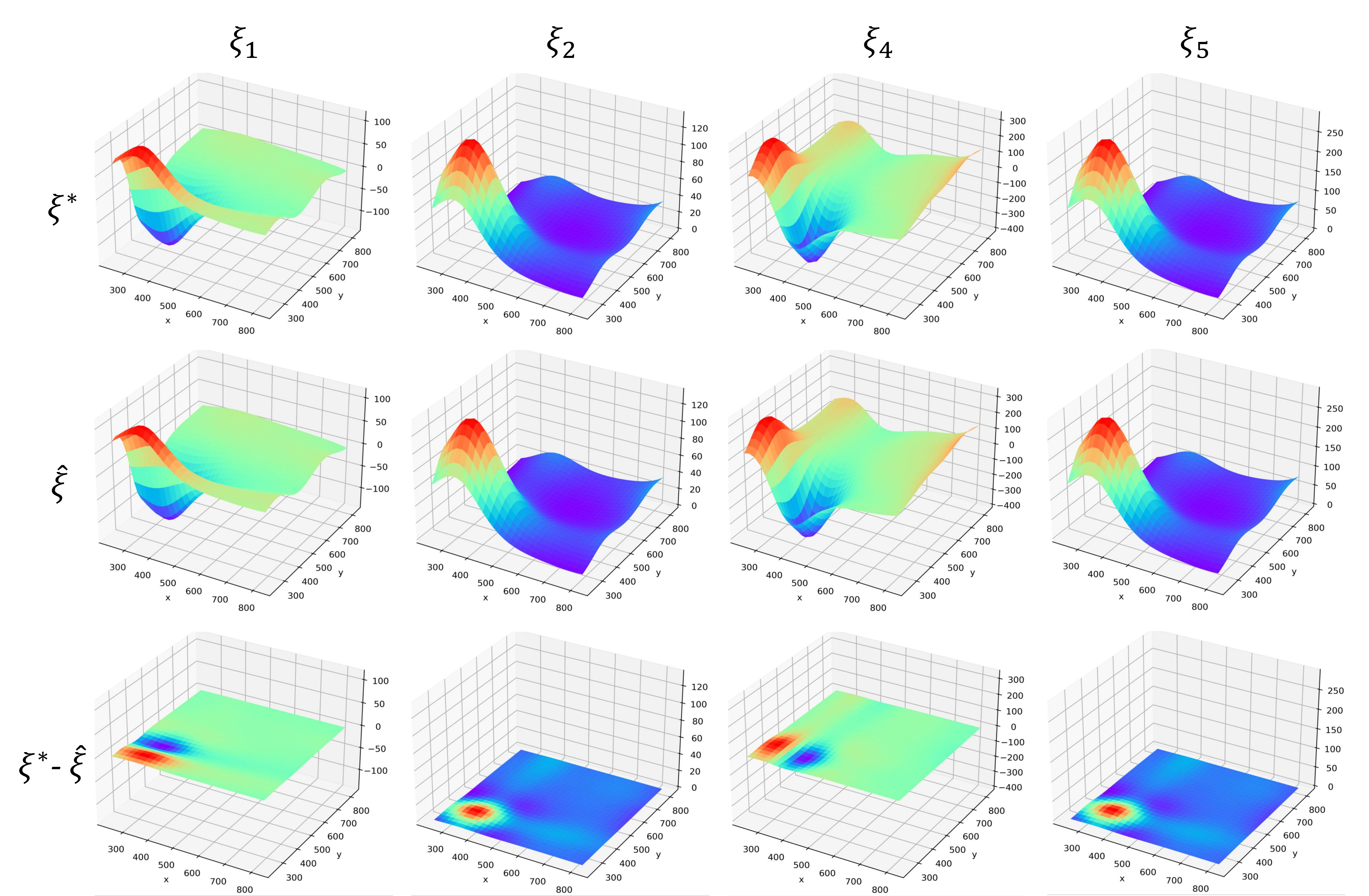}
\caption{Comparison of estimated and correct nonzero coefficients on 2-HNC.}
\label{est4}
\end{figure}

\begin{figure}[H]
\centering
\includegraphics[width=0.8\linewidth]{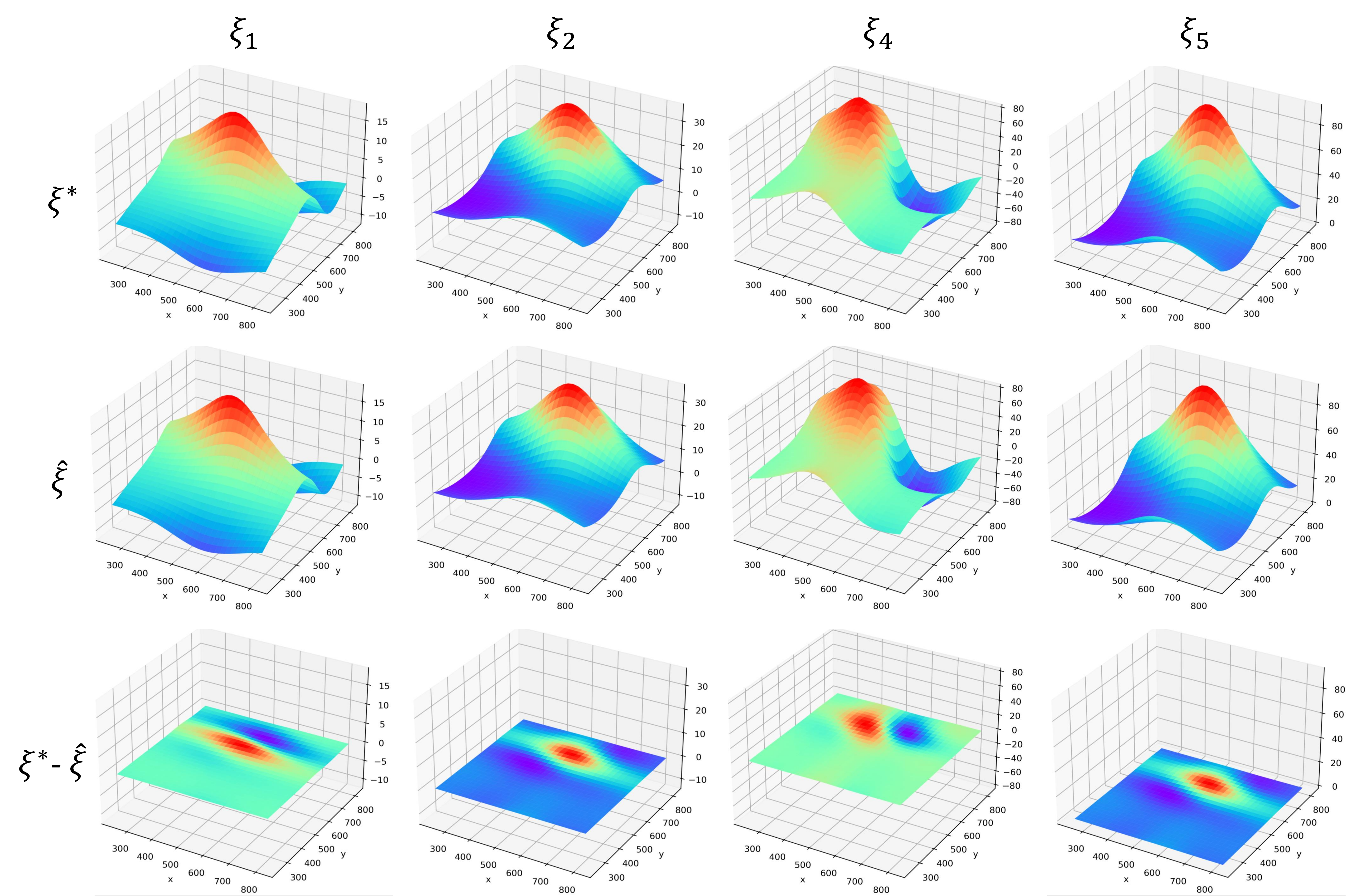}
\caption{Comparison of estimated and correct nonzero coefficients on 3-HNC.}
\label{est5}
\end{figure}

\begin{figure}[H]
\centering
\includegraphics[width=0.8\linewidth]{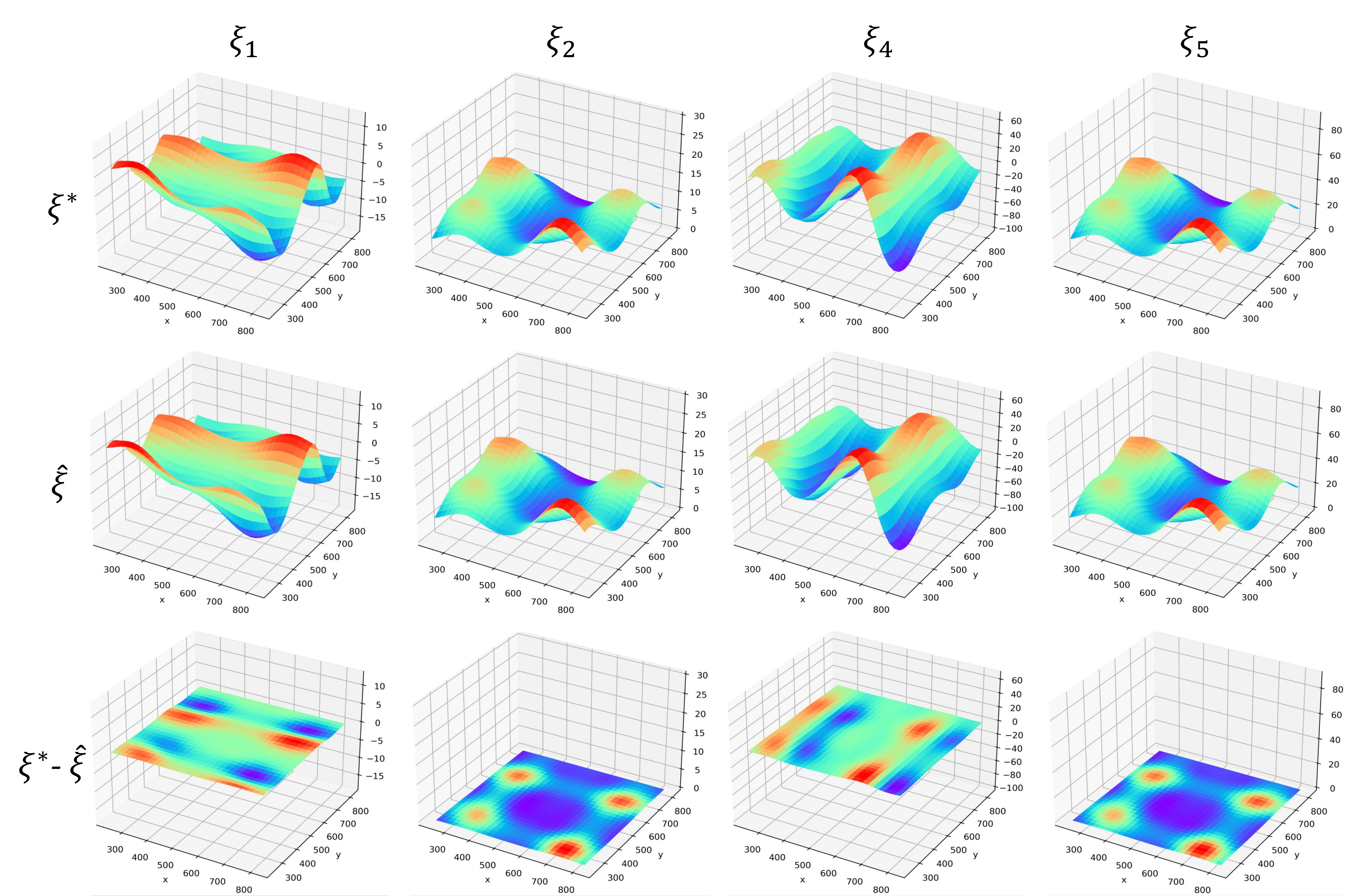}
\caption{Comparison of estimated and correct nonzero coefficients on 4-HNC.}
\label{est6}
\end{figure}

\begin{figure}[H]
\centering
\includegraphics[width=0.8\linewidth]{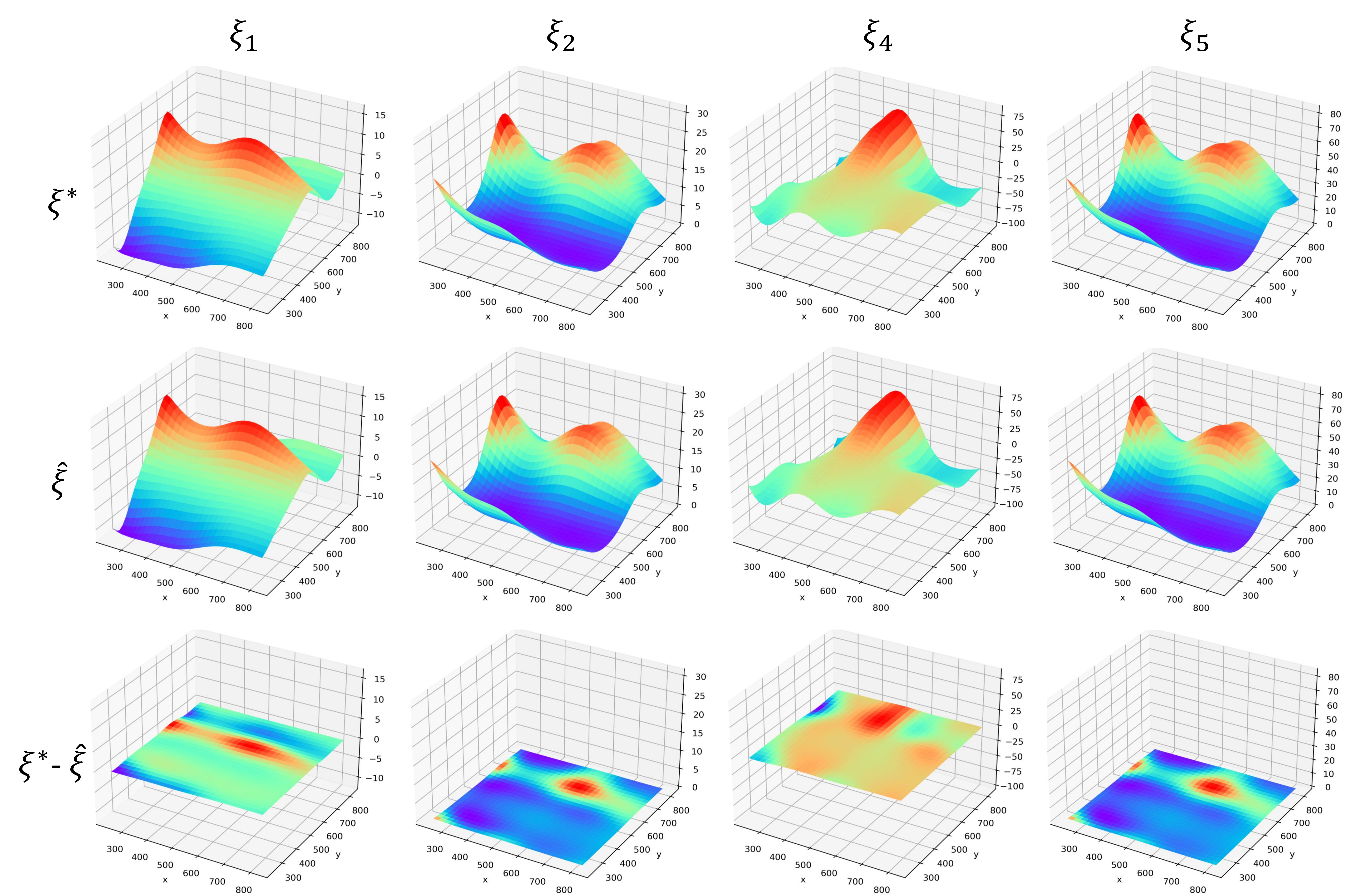}
\caption{Comparison of estimated and correct nonzero coefficients on 5-HNC.}
\label{est7}
\end{figure}

\end{document}